\newtheorem{theorem}{Theorem}[section]
\newtheorem{lemma}[theorem]{Lemma}
\newtheorem{proposition}[theorem]{Proposition}
\theoremstyle{definition}
\newtheorem{definition}[theorem]{Definition}
\newtheorem{assumption}[theorem]{Assumption}
\theoremstyle{remark}
\newtheorem{remark}[theorem]{Remark}
\DeclareMathOperator{\MSE}{MSE}
\DeclareMathOperator*{\argmin}{argmin}
\title{Supervised learning pays attention}
\author{Erin Craig\thanks{Department of Biostatistics, University of Michigan; ercr@umich.edu}   \and
 Robert Tibshirani\thanks{Departments of Biomedical Data Science and Statistics, Stanford University; tibs@stanford.edu}}
\begin{document}

\maketitle
%\tableofcontents 

\begin{abstract}

In-context learning with attention enables large neural networks to make context-specific predictions by selectively focusing on relevant examples. Here, we adapt this idea to supervised learning procedures such as lasso regression and gradient boosting, for {\em tabular data}.  Our goals are to (1) flexibly fit \emph{personalized} models for each prediction point and (2) retain model simplicity and interpretability. 

 Our method fits a local model for each test observation by weighting the training data according to \textit{attention}, a supervised similarity measure that emphasizes features and interactions that are predictive of the outcome. Attention weighting allows the method to adapt to heterogeneous data in a data-driven way, without requiring cluster or similarity pre-specification. Further, our approach is uniquely interpretable: for \emph{each test observation}, we identify which features are most predictive and which training observations are most relevant. We then show how to use attention weighting for time series and spatial data, and we present a method for adapting pretrained tree-based models to distributional shift using attention-weighted residual corrections. Across real and simulated datasets, attention weighting improves predictive performance while preserving interpretability, and theory shows that attention-weighting linear models attain lower mean squared error than the standard linear model under mixture-of-models data-generating processes with known subgroup structure.
\end{abstract}

\section{Introduction}
% Consider a clinical trial where patients receive a new antihypertensive treatment, with each patient described by clinical and genomic features. Our goal is to predict their change in blood pressure after treatment. Further suppose that, unknown to us, patients fall into two subgroups with different covariate-response relationships, as illustrated in Figure~\ref{fig:motivation}.  For example, in one subgroup, younger age and lower BMI may both be associated with better survival; in another, younger age may be associated with better survival while higher BMI is protective.

% Consider a clinical trial where patients receive a new antihypertensive treatment, with each patient described by clinical and genomic features. Our goal is to predict their change in blood pressure after treatment. Further suppose that, unknown to us, patients fall into two subgroups with different covariate-response relationships, as illustrated in Figure~\ref{fig:motivation}. For example, in one subgroup (green dots), blood pressure ($\bm{y}$) may decrease more with both younger age ($\bm{x}_1$) and higher baseline blood pressure ($\bm{x}_2$); in the other subgroup (purple triangles), higher baseline blood pressure instead predicts worse response.

Consider a dataset consisting of features $\bm{x}_1$ and $\bm{x}_2$ and response $\bm{y}$. Our goal is to predict $\bm{y}$ given $\bm{x}_1$ and $\bm{x}_2$. Further suppose that, unknown to us, the data are heterogeneous: observations fall into two subgroups with different covariate-response relationships, as illustrated in Figure~\ref{fig:motivation}. 

\begin{figure}[H]
    \centering
    \includegraphics[width=.7\linewidth]{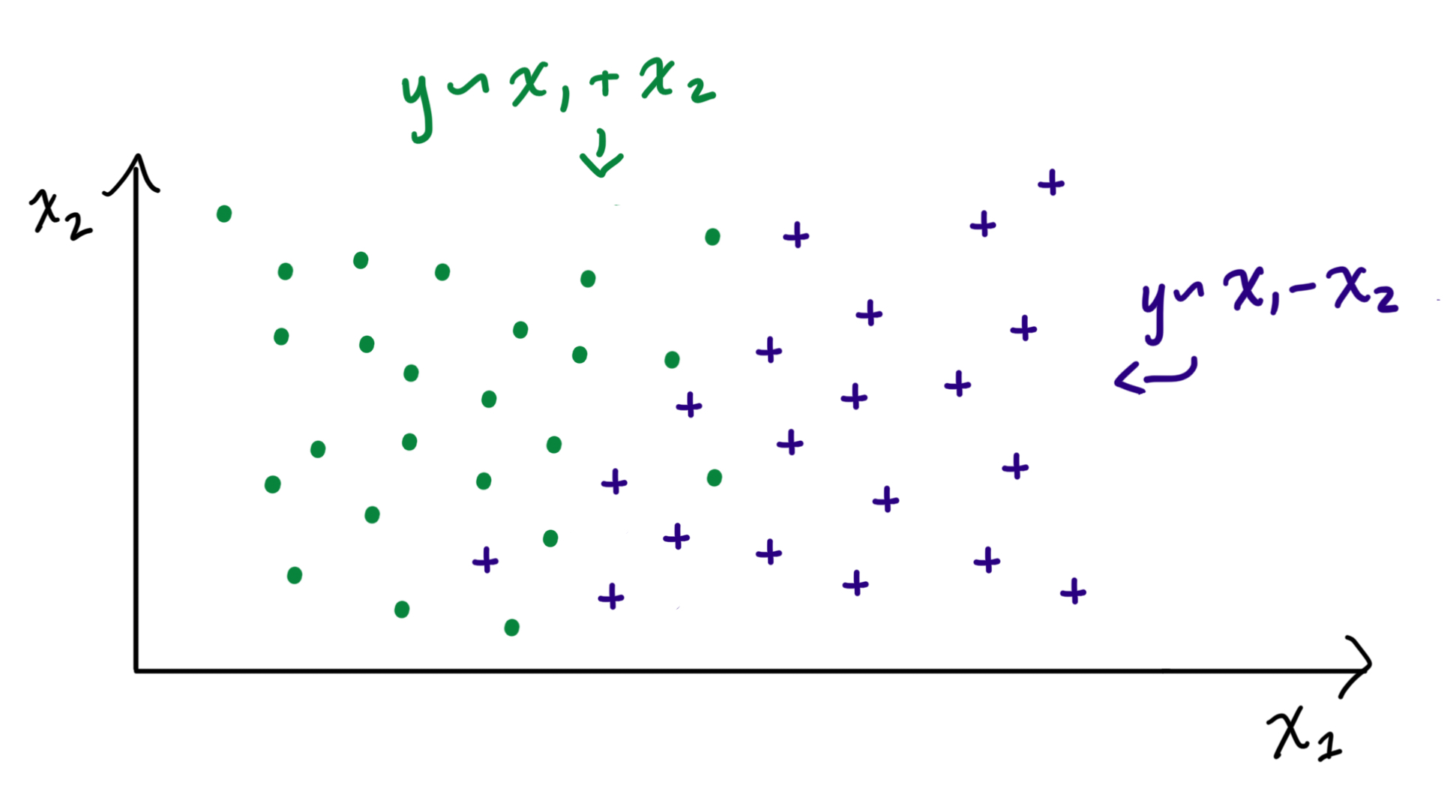}
    \caption{\em Toy example of a dataset with two features, $x_1$ and $x_2$, where the true model depends on the values of the features.}
    \label{fig:motivation}
\end{figure}

% \begin{figure}[H]
%     \centering
%     \includegraphics[width=.7\linewidth]{figures/lasso_groups_2.png}
%     \caption{\em Toy example of a dataset with two features, $x_1$ and $x_2$, where the true model depends on the values of the features.}
%     \label{fig:motivation}
% \end{figure}

Our setup here is a common statistical framework: our data is composed of a feature matrix $\bm{X}$ and a vector response $\bm{y}$. This has become known as \emph{tabular data} in the machine learning community, in contrast to sequence data (e.g. text) that is the focus of large language models and has led to the development of attention and in-context learning. 

If we use a standard linear model (fit by say least squares or lasso), we will fit a single set of coefficients to all observations, thereby averaging effects across the two groups. If we suspected heterogeneity in our data, we might 
\iffalse instead use a method such as \textit{customized training}~\citep{powers2016customized} which first clusters the test data and then fits cluster-specific models using train points that are near to each cluster. Customized training would work well in this setting (with the right clustering); however, unsupervised clustering is unlikely to recover groups defined by their covariate-response relationships.
\fi 
cluster the data and then fit separate models to each resulting group. This might be successful if the clusters are well separated and we choose the correct number of clusters.
Another option is to fit a more complicated model like gradient boosting or a neural network, which adapt to data heterogeneity but sacrifices interpretability.

In this paper we propose a method to fit personalized, locally weighted models for each test point, anchored by a global model fit to the full training set.  At a high level, for a test observation $\bm{x}^*$ and training data $(\bm{X}, \bm{y})$, our method (1) derives a set of observation weights $\bm{w}^*$ (\textit{attention weights}) that describe similarity between $\bm{x}^*$ and the rows of $\bm{X}$ as they relate to the response $\bm{y}$; and (2) uses them to fit a ``bespoke'' weighted model to make a prediction at $\bm{x}^*$. 

Importantly, we define similarity in a \emph{supervised} way: two observations are considered similar only if they share features and interactions useful for predicting $\bm{y}$, which underlies the improvement in predictive performance of our method. This supervised similarity distinguishes our approach from local and kernel regression~\citep{cleveland1988locally}, and is inspired by the success of row-wise attention in foundation models for tabular data, particularly TabPFN~\citep{hollmann2025accurate}. 

The idea of \emph{supervised attention weights} is quite general, and we show how to apply it to standard supervised learning models for tabular data, time series, spatial data, and longitudinal data subject to drift.
Our intuition is that ``one-size-fits-all'' models do not always provide the best prediction for each data point. Rather, data can be heterogeneous in ways that are difficult to enumerate or anticipate\footnote{
In his popular children's show, Mr. Rogers says ``You are the only one like you.'' We extend this idea from ourselves to our data: each data point is unique and may be best fit by its own model.}. Attention weighting naturally accommodates a potentially continuous spectrum of latent subgroups in the data. By assigning attention weights rather than discrete labels, it adapts to both hard clustering (distinct groups) and soft clustering (overlapping or blended subgroup memberships). 

\iffalse
[make this general] We first describe \textit{attention lasso}, a method that combines lasso with local regression~\citep{cleveland1988locally} using a weighting scheme inspired by the wildly popular neural network attention mechanism and in-context learning. In fact, as we explain later, the simplest form of our attention mechanism follows directly from a basic form of that used in neural networks. We then extend our method to other supervised learning models.
\fi

\iffalse
Our proposal is as follows: for a test observation $\bm{x}^*$, attention matrix $\bm{W}$ and train data $\bm{X}$, we fit a weighted lasso model with weights that look like
\[\bm{w}^* = \text{softmax} \left( \bm{x}^* \bm{W} \bm{X}^T \right),\] 
which is a single output from a simplified attention head. We define $\bm{W}$ as a diagonal matrix where the diagonal entries are the absolute values of coefficients from ridge regression, thereby upweighting training points that are similar to $\bm{x}^*$ with respect to features associated with $\bm{y}$. We then extend this idea to define the weights $\bm{w}^*$ using random forest proximity, which more closely mimics the nonlinearities of neural networks and gives a considerable increase in performance. We find that both forms of Attention-SL outperform the lasso for a broad variety of datasets and tasks. 
\fi

The contributions of this work are as follows:
\begin{enumerate}
    \item We present a simple overview of attention and in-context learning (Section~\ref{sec:background}), and its connection to local linear regression and kernel methods (Section~\ref{sec:other_related}).
    \item We develop a method to use attention to fit interpretable, personalized models for tabular data (Sections~\ref{sec:method} and~\ref{sec:generalization}) whose predictive performance typically matches or exceeds that of its non-attention counterparts.
    \item We extend our method to spatial and time-series data (Section~\ref{sec:extensions}).
    \item  We introduce a novel idea for model interpretability (Section~\ref{sec:examples:real}). Rather than identify a single set of predictive features for a dataset, we identify \textit{for each test point} which features are most useful as well as which training points are most relevant.
    \item We present a method that uses attention to adapt pretrained tree-based models at prediction time without refitting, for settings where the data distribution may drift between training and deployment and refitting the model is expensive or otherwise prohibitive (Section~\ref{sec:datadrift}).
\end{enumerate}
We hope that contribution (1) will benefit those seeking to understand and apply the powerful AI/ML idea of attention to statistical and machine learning for tabular data, and also reveal  interesting connections to local regression and kernel methods. Contributions (2-5) should be useful for researchers interested in prediction and characterizing heterogeneity.

The rest of this paper is as follows. Section~\ref{sec:method} shows our method in detail as applied to lasso linear regression (\emph{attention lasso}), and is followed by real and simulated examples in Section~\ref{sec:examples}. Section~\ref{sec:background} explains attention, self-attention and in-context learning as they relate to our work, and Section~\ref{sec:other_related} discusses other related work, including kernel regression and local linear regression. We extend our method to spatial and time series data in Section~\ref{sec:extensions}, to other base learners in Section~\ref{sec:generalization}, and to longitudinal data in Section~\ref{sec:datadrift}. We close with a discussion in Section~\ref{sec:discussion}. Appendix~\ref{sec:theory} compares lasso and attention lasso in a mixture-of-models setting and finds that attention lasso reduces both prediction error and bias in the fitted coefficients, and Appendix~\ref{sec:softmax_gaussian} shows the relationship between attention and Gaussian kernel regression.

\section{Supervised learning with attention for tabular data}
\label{sec:method}
\subsection{General procedure}
We first describe \textit{attention for supervised learning}
as applied to  tabular data. 
While we focus on continuous outcomes, our approach applies to any response type, including binary, survival, and multiclass.
The special case of the lasso is described in the next section.

 Figure~\ref{fig:overview} and Algorithm~\ref{alg:attention_super} describe our method  in detail. Given training data $\bm{X}$ with response $\bm{y}$ and prediction point $\bm{x}^*$, we fit a weighted model where the weights reflect similarity to $\bm{x}^*$. One natural similarity measure, inspired by neural network attention (Section~\ref{sec:background}), is
\begin{equation}
   \text{softmax}\left(\bm{x}^* \bm{\hat W} \bm{X}^T\right), 
   \label{eqn:XXT}
\end{equation} 
where $\bm{\hat W}$ is a diagonal matrix whose entries are the absolute values of coefficients from ridge regression fit to $(\bm{X}, \bm{y})$. This ensures that two points are considered similar only with respect to features predictive of $\bm{y}$. This formulation is much simpler than even a single linear attention head, which is typically a dense, non-symmetric matrix, and its simplicity makes it amenable to theoretical analysis (Appendix~\ref{sec:theory}). However, the diagonal structure of $\bm{\hat W}$ cannot represent feature interactions.

As a critical enhancement, we instead use random forest proximity as our similarity measure: we fit a random forest to $\left(\bm{X}, \bm{y}\right)$ and define the proximity between two points as the proportion of trees in which they land in the same terminal node. In our experiments, proximity-based weights greatly outperform the ridge-based weights of Equation~(\ref{eqn:XXT}).

Because we fit a separate model for each test observation, we employ a regularization strategy to control complexity: the final prediction is a convex combination of a \emph{baseline model} (fit without weights to the full dataset) and an \emph{attention model} (the weighted model):
\begin{equation}
\hat y^* = (1 - m) \hat y^*_\text{base} + m \hat y^*_\text{attn},
\end{equation}
where $m$ is a mixing hyperparameter selected via cross-validation. When $m = 0$, we recover a single global model; when $m=1$, predictions rely entirely on the personalized model.

\begin{figure}[H]
    \centering
    \includegraphics[width=\linewidth]{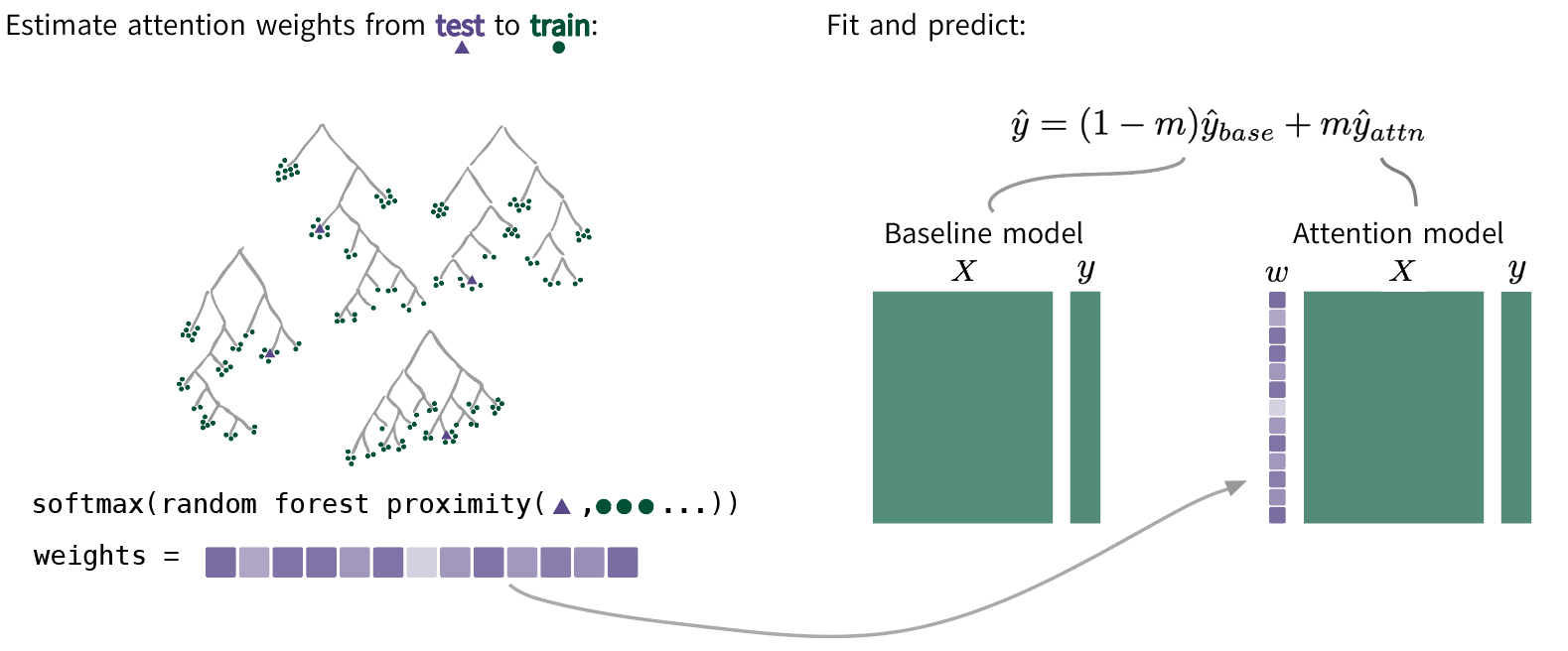}
    \caption{\em An outline of supervised learning with attention. First, we fit a random forest to estimate similarity between observations; second,  a  baseline  model (for example, lasso or boosting) is fit to the training data.  Then for each test observation $\textbf{x}^*$, we estimate attention weights using the random forest similarities between $\textbf{x}^*$ and each of the training observations;  we use these weights to fit an attention-weighted  model specific to $\textbf{x}^*$. Finally, we blend the weighted  model with the baseline model to make our prediction at $\textbf{x}^*$. More details are given in  Algorithm~\ref{alg:attention_super}}
    \label{fig:overview}
\end{figure}

\begin{algorithm}[H]
    \caption{Supervised Learning with Attention } 
    \textbf{Input:} Train set $\bm{X}, \bm{y}$ with $n$ observations, test set $\bm{X}^*$, mixing parameter $m \in [0, 1]$. \\
    \textbf{Output:} Predictions $\bm{\hat{y}}^*$ 
    %for $\bm{X}^*$, 
    %and fitted lasso model coefficients 
    for each row of $\bm{X}^*$. 
    \hrule
    \begin{enumerate}
    \item \textbf{Compute attention weights $\bm{\hat{A}}^*$:}  Fit a random forest  to $(\bm{X}, \bm{y})$. For each observation $\bm{x}_i^*$ of $\bm{X}^*$, define $n$ similarity scores
    $\bm{\hat{S}}^*$ equal to the proportion of times an observation $\bm{x}_i^*$ falls into the same terminal node as each of the rows of $\bm{X}$.

    Define attention weights
        \begin{equation}    
            \bm{\hat{A}}^* = \text{softmax}(\bm{\hat{S}}^*),
        \end{equation}
    
    where \text{softmax} is applied row-wise to $\bm{\hat{S}}$. The $i^\text{th}$ row of the attention matrix $\bm{\hat{A}}^*$ corresponds to the attention scores from test observation $i$ to each of the train observations.
    
    \item \textbf{Fit baseline  model:} Fit a supervised learning model to $\bm{X}, \bm{y}$. Predict for $\bm{X}^*$ to obtain $\bm{\hat{y}}^*_{\text{base}}$.
    
    \item \textbf{Fit attention-weighted  models:} For $\bm{x}^*_i$, the $i^\text{th}$ row of $\bm{X}^*$, fit a weighted supervised learning model  to $\bm{X}, \bm{y}$ using weights $\bm{\hat{A}}^*_i$. Predict at $\bm{x}^*_i$ to obtain $\bm{\hat{y}}^*_{i, \text{attn}}$.
    
    \item \textbf{Combine predictions:} Return the weighted average
    \[
    \bm{\hat{y}}^* = (1 - m) \bm{\hat{y}}^*_{\text{base}} + m \bm{\hat{y}}^*_{\text{attn}}.
    \]
\end{enumerate}
\hskip 0.5in Note: the mixing parameter $m$ is selected  through cross-validation.
\label{alg:attention_super}
\end{algorithm}

\subsection{The attention lasso}
In much of this paper, we focus on the application of the above idea  to the special case of the lasso, detailed in this section. 
We explore its application to other models---
especially gradient boosting---
in Section~\ref{sec:generalization}.

Algorithm \ref{alg:attention_lasso} shows the details of our proposal when the lasso is used for both the baseline and attention-weighted models.
\begin{algorithm}[H]
    \caption{ Attention Lasso } 
    \textbf{Input:} Train set $\bm{X}, \bm{y}$ with $n$ observations, test set $\bm{X}^*$, mixing parameter $m \in [0, 1]$. \\
    \textbf{Output:} Predictions $\bm{\hat{y}}^*$ for $\bm{X}^*$, 
    %and fitted lasso model coefficients 
    for each row of $\bm{X}^*$. 
    \hrule
    \begin{enumerate}
    \item \textbf{Compute attention weights $\bm{\hat{A}}^*$:}  Fit a random forest  to $(\bm{X}, \bm{y})$. For each observation $\bm{x}_i^*$ of $\bm{X}^*$, define $n$ similarity scores
    $\bm{\hat{S}}^*$ equal to the proportion of times an observation $\bm{x}_i^*$ falls into the same terminal node as each of the rows of $\bm{X}$.

    Define attention weights
        \begin{equation}    
            \bm{\hat{A}}^* = \text{softmax}(\bm{\hat{S}}^*),
        \end{equation}
    
    where \text{softmax} is applied row-wise to $\bm{\hat{S}}$. The $i^\text{th}$ row of the attention matrix $\bm{\hat{A}}^*$ corresponds to the attention scores from test observation $i$ to each of the train observations.
    
    \item \textbf{Fit baseline  model:} Fit a lasso regression to $(\bm{X}, \bm{y})$ and select regularization parameter $\hat{\lambda}$. Predict for $\bm{X}^*$ to obtain $\bm{\hat{y}}^*_{\text{base}}$.
    
    \item \textbf{Fit attention-weighted lasso models:} For $\bm{x}^*_i$, the $i^\text{th}$ row of $\bm{X}^*$, fit a weighted lasso regression to $\bm{X}, \bm{y}$ using weights $\bm{\hat{A}}^*_i$ and regularization parameter $\hat{\lambda}$. Predict at $\bm{x}^*_i$ to obtain $\bm{\hat{y}}^*_{i, \text{attn}}$.
    
    \item \textbf{Combine predictions:} Return the weighted average
    \[
    \bm{\hat{y}}^* = (1 - m) \bm{\hat{y}}^*_{\text{base}} + m \bm{\hat{y}}^*_{\text{attn}}.
    \]
\end{enumerate}
\label{alg:attention_lasso}
\end{algorithm}
 To control complexity, all of the  attention models share a common lasso regularization parameter $\hat{\lambda}$, selected through cross-validation when fitting the baseline model.
 
We interpret fitted models by clustering coefficient vectors using \texttt{protoclust}, a hierarchical clustering method where each cluster is represented by a prototypical element~\cite{bien2011hierarchical, protoclust}. This groups test data according to their fitted models and enables interpretation. Importantly, because the lasso produces sparse coefficients, clustering occurs in a lower-dimensional space than the original features, which is typically much simpler computationally and easier to interpret. Example clusterings and visualizations appear in Section~\ref{sec:examples}.

\begin{remark}
\small  
{\em Parallelization of weighted fitting.}
Attention lasso fits a separate model for each test point, which may seem prohibitive. However, model fitting is embarrassingly parallel across test points. Additionally, the computational cost of attention lasso is not without precedent; it is similar to leave-one-out CV. For application to other models--- for example gradient boosting (Section~\ref{sec:generalization})--- other computational tricks are  needed.
\end{remark}

\begin{remark}
\small
{\em Cost of cross-validation.}
Selecting $m$ via cross-validation is efficient because it does not require refitting models for each candidate value. For each fold $k$, apply the full training algorithm using $(\bm{X}_{-k}, \bm{y}_{-k})$ as the training set and $\bm{X}_k$ as the prediction set. Store the baseline lasso predictions $\bm{\hat{y}}_{\text{base}, k}$ and the attention model predictions $\bm{\hat{y}}_{\text{attn}, k}$, and use these pre-computed predictions to compare each candidate mixing value (e.g. $m \in \{0, 0.1, 0.2, \ldots, 1\}$): $\bm{\hat{y}}_k(m) = (1-m)\bm{\hat{y}}_{\text{base}, k} + m\bm{\hat{y}}_{\text{attn}, k}$. Finally, estimate the cross-validation error for each candidate $m$ and select the value $\hat{m}$ that minimizes the mean error across folds.
\end{remark}
\begin{remark}
\small
{\em Adaptive selection of a mixing parameter for each test point.}
For each prediction point, attention lasso blends the standard lasso model with a weighted model. So far, we have selected a single mixing parameter $m$ for all data. But perhaps some data points are well-represented by the global model and therefore should have $m = 0$, while others are from a smaller subset within the data and would benefit from $m$ closer to $1$. To compute \emph{adaptive} mixing values for a given prediction point $\bm{x^*}$, compute a \emph{weighted} error metric using the attention weights for $\bm{x^*}$ when doing cross-validation. Because cross-validation predictions are already stored, this can be done in a vectorized fashion across all test points at once. By optimizing this weighted metric, we are selecting the mixing parameter $\hat m^*$ that gives the best performance for data similar to $\bm{x^*}$.
\end{remark}
\begin{remark}
\small
{\em Softmax temperature.}
If desired we may use a ``temperature'' hyperparameter to augment softmax: before applying softmax, simply divide the scores by the temperature. For temperature $<1$, this will cause scores to concentrate more heavily on a small subset of features; temperature $>1$ will make scores more spread out and diffuse.
\end{remark}

\iffalse
\begin{remark}
\small
{\em Attention lasso as a hybrid algorithm:} Attention lasso combines random forest proximity with weighted local regression. The forest provides similarity weights, while the weighted lasso fits local models that blend kernel-smoothed averaging (intercept) with sparse linear terms (coefficients).
\end{remark}
\fi

\section{Examples}
\label{sec:examples}

To evaluate attention lasso, we compare its predictive performance to the lasso, XGBoost, LightGBM, random forest, and K-nearest neighbors (KNN) across real and simulated datasets. All methods are implemented using standard R packages: \texttt{glmnet}~\cite{glmnet}, \texttt{lightgbm}~\cite{lightgbm}, \texttt{xgboost}~\cite{xgb}, \texttt{ranger}~\cite{ranger}, and \texttt{FNN}~\cite{FNN}, with default settings except where mentioned. The random forest was fit with 500 trees; XGBoost used $\eta = 0.1$, maximum depth 6, with the number of rounds selected by 10-fold cross-validation with early stopping; LightGBM used 500 maximum rounds and no limit to tree depth; KNN used cross-validation to select the number of neighbors among  $K = 3$, 5, 10, or 15. Attention lasso used 500 trees for random forest proximity and 10-fold cross-validation to select the mixing hyperparameter and lasso shrinkage parameter. The lasso likewise used 10-fold cross-validation to select the shrinkage parameter. In all cases, we selected the parameters that minimized the cross-validated error. Methods were compared using \emph{relative improvement} in prediction squared error to the lasso:  the percent improvement for model $i$ is defined as $100 \times \frac{\text{lasso PSE }-\text{model $i$ PSE}}{\text{lasso PSE}}$.

Across our examples, we find that attention lasso usually matched or outperformed the lasso and was competitive with more complex models like LightGBM, XGBoost, KNN, and random forests. Importantly, it retains the advantage of the lasso's sparsity and interpretability, and it offers a lens into data heterogeneity.

\subsection{Real data examples}
\label{sec:examples:real}
\subsubsection{UC Irvine Machine Learning Repository}
We consider 12 datasets from the UCI Machine Learning Repository~\cite{UCIRepository}. We selected datasets that are regression tasks with fewer than $n = 5000$ observations. For each dataset, we ran 50 iterations: each iteration split data into 50/50 train/test sets and fit 6 models (lasso, attention lasso, LightGBM, XGBoost, random forest, KNN) with train data. In each iteration, we impute missing values using the column means of the non-missing values in the train set. 
%We compared the improvement in test set MSE relative to the lasso. 
A few datasets required preprocessing: we log-transformed the response ($\log(y+1)$) in the Facebook Metrics data, and we averaged responses in multi-response datasets. Our results are summarized in Table~\ref{tab:results_22datasets} and Figure~\ref{fig:22_datasets}. 

In all but one dataset, attention lasso outperformed the lasso. Looking across all models, attention lasso had the best performance in four of the 12 datasets, XGBoost and LightGBM each had best performance for two, random forest and lasso each for one. In the dataset where lasso outperformed attention lasso, the difference was small ($0.4\%$), which we attribute to the mixing parameter allowing the model to revert toward the baseline lasso when local adaptation is not useful.

\begin{table}[H]
\centering
\begin{small}
\begin{tabular}{lrr|rrrrr}
\toprule
Dataset & n & p & Attention & LightGBM & XGBoost & RF & KNN \\
\midrule
Airfoil Self-Noise & 1503 & 5 & 75.0 (1.1) & \textbf{84.3 (0.3)} & 84.1 (0.3) & 69.3 (0.3) & 48.2 (0.8) \\
Auto MPG & 398 & 7 & \textbf{31.8 (0.9)} & 26.1 (1.0) & 17.2 (1.1) & 26.4 (1.1) & 10.1 (1.2) \\
Automobile & 205 & 25 & \textbf{37.1 (1.3)} & 27.4 (1.6) & 12.3 (3.1) & 33.0 (1.2) & -25.6 (2.5) \\
Communities \& Crime & 1994 & 127 & \textbf{3.1 (0.4)} & -1.5 (0.4) & -13.5 (0.7) & 1.1 (0.3) & -19.0 (0.7) \\
Concrete Comp.~Strength & 1030 & 8 & 62.8 (0.7) & \textbf{77.7 (0.3)} & 74.4 (0.4) & 64.1 (0.4) & 17.6 (1.0) \\
Facebook Metrics & 500 & 18 & 93.6 (0.9) & 90.5 (0.4) & \textbf{94.1 (0.5)} & 93.4 (0.3) & 56.8 (2.0) \\
Forest Fires & 517 & 12 & -0.4 (0.3) & -0.1 (0.8) & -9.0 (4.1) & -21.1 (6.3) & -24.5 (5.6) \\
Infrared Therm.~Temp.~& 1020 & 33 & 3.4 (0.5) & -1.8 (1.0) & -4.3 (1.1) & \textbf{6.0 (0.7)} & -11.2 (0.9) \\
Liver Disorders & 345 & 5 & 0.2 (0.7) & 1.7 (1.0) & -12.5 (1.7) & \textbf{3.7 (1.0)} & 3.6 (1.2) \\
Real Estate Valuation & 414 & 6 & 18.2 (1.3) & 24.2 (0.8) & 14.0 (2.0) & \textbf{29.9 (0.9)} & 8.6 (0.8) \\
Servo & 167 & 4 & 63.8 (1.7) & 23.3 (1.9) & \textbf{74.0 (1.6)} & 42.2 (1.5) & -21.9 (4.5) \\
Stock Portfolio Perf.~& 315 & 12 & \textbf{60.4 (0.8)} & 32.1 (1.9) & 23.9 (2.4) & -15.6 (2.8) & -200.7 (6.1) \\
\bottomrule
\end{tabular}
\caption{\em Mean (SE) of relative improvement (\%) over lasso across 12 datasets in the UCI repository (higher is better). Best method per dataset in bold. Attention lasso has best performance on 4 of the 12, random forest on 3, LightGBM and XGBoost each on 2, and lasso on 1. Attention lasso outperforms lasso on 11 of the 12, and nearly matches on the last (0.4\% worse).}
\label{tab:results_22datasets}
\end{small}
\end{table}

\clearpage
\begin{figure}[H]
    \centering
    \includegraphics[width=\linewidth]{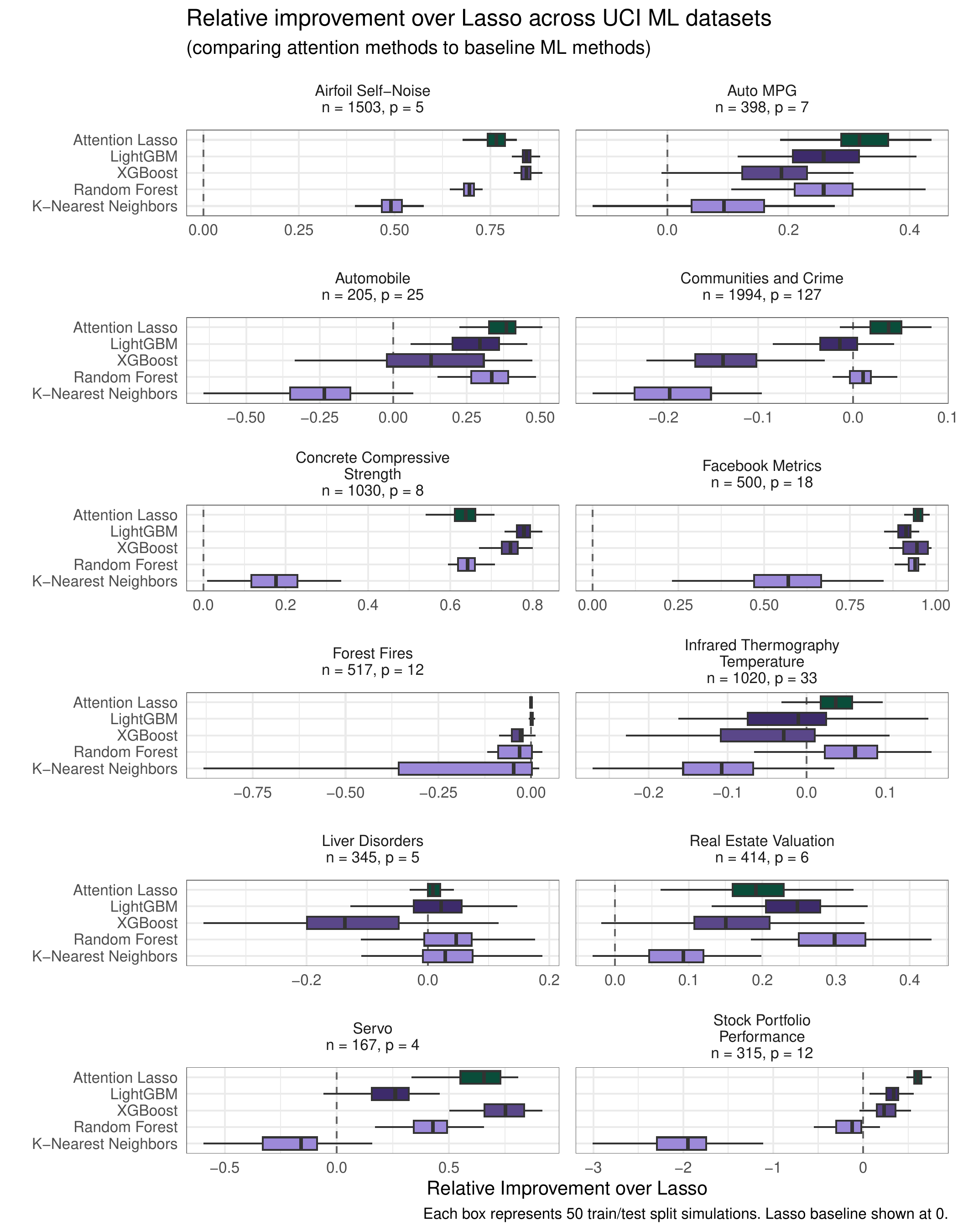}
    \caption{\em Results described in Section~\ref{sec:examples:real} and summarized in Table~\ref{tab:results_22datasets}. Across 50 train/test splits for each dataset, attention lasso has strong performance relative to lasso, XGBoost, LightGBM, random forest and KNN. In each plot, the vertical line at $x = 0\%$ indicates no change relative to the lasso, and larger values indicate better performance (lower PSE) than the lasso.}
    \label{fig:22_datasets}
\end{figure}
\clearpage

\subsubsection{Interpretation of the final model}
The lasso is easy to interpret: a lasso model consists of a single, sparse vector of coefficients for the entire dataset. Although attention lasso produces a coefficient vector for each test point, these can be summarized by clustering. The shared regularization and blending with the baseline model ensure that individual models are similar enough for clustering to be meaningful.

To interpret the fitted models from attention lasso, first compute the blended model coefficients for each data point: $(1 - m) \hat{\bm{\beta}}_{\text{base}} + m \hat{\bm{\beta}}_{\text{attn}}$. Then cluster these blended coefficients with \texttt{protoclust}~\cite{protoclust} and visualize them with a heatmap to reveal within-cluster patterns. Figure~\ref{fig:interpretability_plots} illustrates this pipeline. The rightmost plot in each row compares the PSE from attention lasso and baseline lasso within each cluster.

% RT: remove the space before this  next figure

\begin{figure}[H]
    \centering
    \begin{subfigure}{0.32\linewidth}
        \centering
        \includegraphics[width=\linewidth]{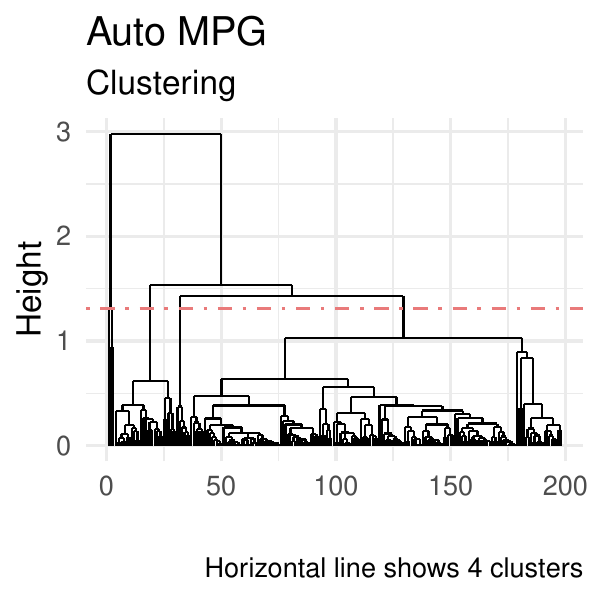}
    \end{subfigure}
    \hfill
    \begin{subfigure}{0.32\linewidth}
        \centering
        \includegraphics[width=\linewidth]{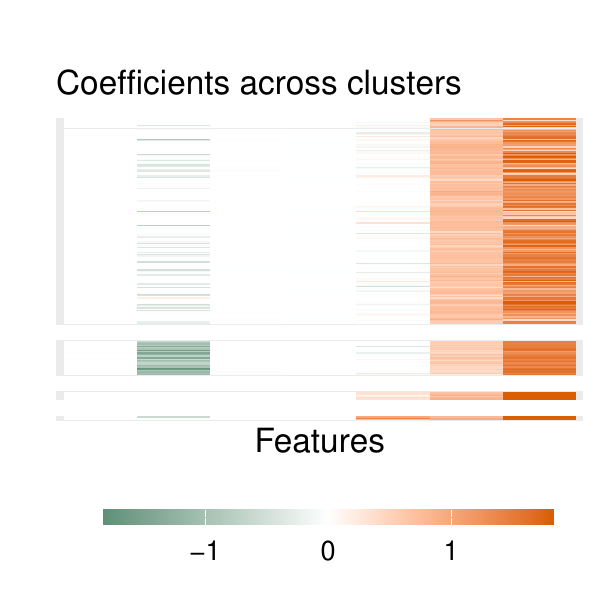}
    \end{subfigure}
    \hfill
    \begin{subfigure}{0.32\linewidth}
        \centering
        \includegraphics[width=\linewidth]{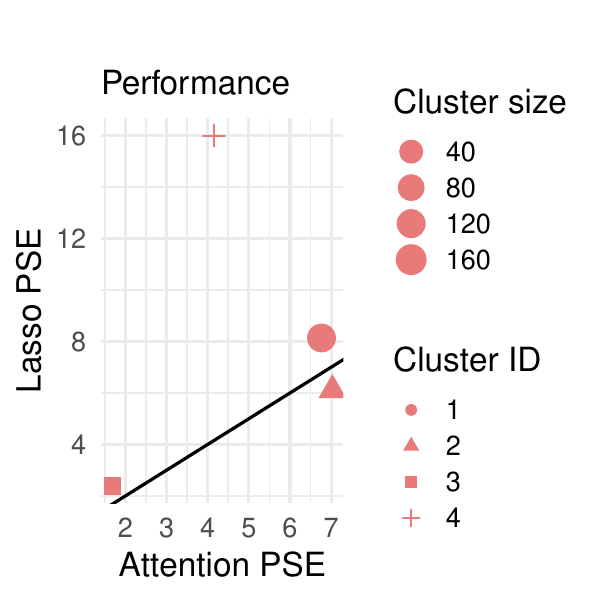}
    \end{subfigure}

    \vspace{1em}

    \centering
    \begin{subfigure}{0.32\linewidth}
        \centering
        \includegraphics[width=\linewidth]{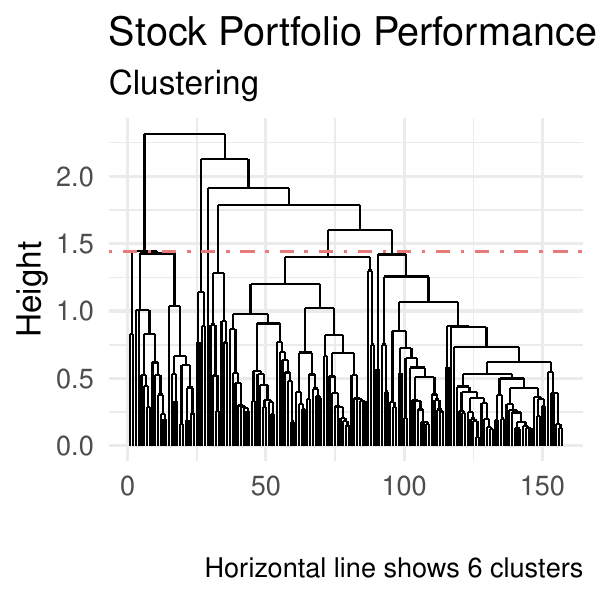}
    \end{subfigure}
    \hfill
    \begin{subfigure}{0.32\linewidth}
        \centering
        \includegraphics[width=\linewidth]{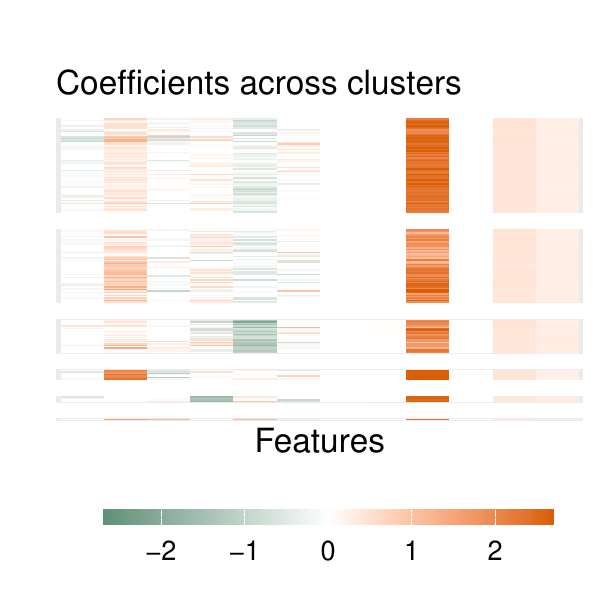}
    \end{subfigure}
    \hfill
    \begin{subfigure}{0.32\linewidth}
        \centering
        \includegraphics[width=\linewidth]{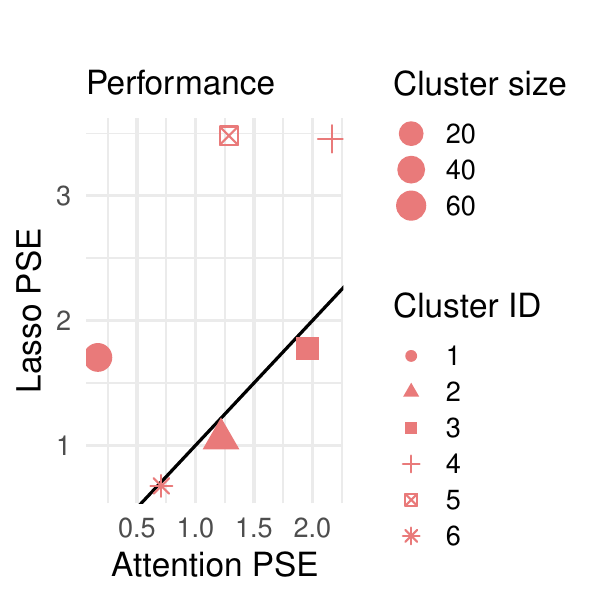}
    \end{subfigure}
    
    \vspace{1em} % Add vertical space between rows
    
    \begin{subfigure}{0.32\linewidth}
        \centering
        \includegraphics[width=\linewidth]{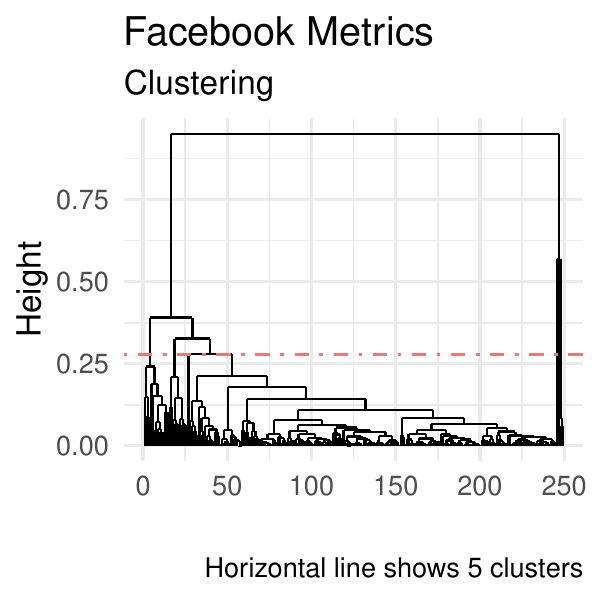}
        \caption{\em Coefficient clustering.}
        \label{fig:fig12}
    \end{subfigure}
    \hfill
    \begin{subfigure}{0.32\linewidth}
        \centering
        \includegraphics[width=\linewidth]{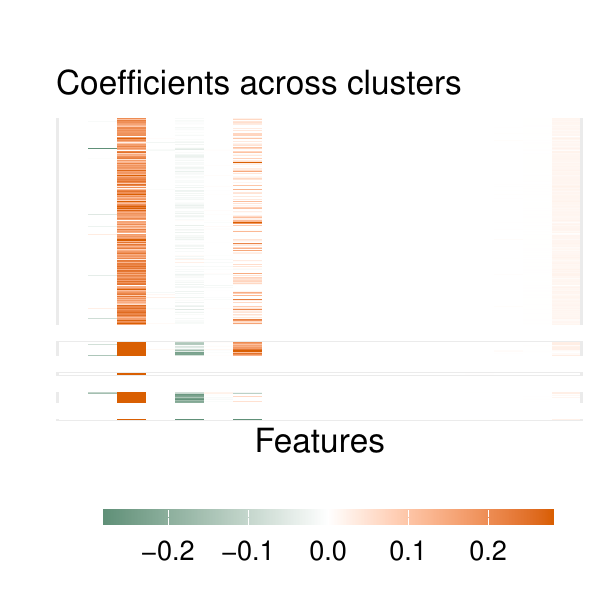}
        \caption{\em Coefficient heatmap.}
        \label{fig:fig22}
    \end{subfigure}
    \hfill
    \begin{subfigure}{0.32\linewidth}
        \centering
        \includegraphics[width=\linewidth]{figures/comparison_2.pdf}
        \caption{\em Performance comparison.}
        \label{fig:fig32}
    \end{subfigure}
    
    \caption{\em Clustered coefficients and performance for the Auto MPG (top row) Stock Portfolio Performance (middle row) and Facebook Metrics datasets (bottom row). Models were trained using a random 50\% of data, and performance is reported using the remainder. Attention lasso coefficient clustering reveals patterns in the data that may be useful for characterizing data heterogeneity.}
    \label{fig:interpretability_plots}
\end{figure}

\subsection{Simulations}
\label{sec:examples:sims}
We evaluate attention lasso on four simulated settings designed to test different forms of data heterogeneity. Each setting uses training and test datasets with $n = 300$ observations and $p = 30$ features (except Setting 2, which uses $p=100$). Features are generated as $\bm{X} \sim N(\bm{0}, \bm{I}_p)$ with modifications described below, and responses are generated as $y_i = \bm{x}_i^\top \bm{\beta}_i + \epsilon_i$ where $\epsilon_i \sim N(0, \sigma^2)$ with $\sigma$ chosen to achieve an average signal-to-noise ratio of approximately 2.5. Table~\ref{tab:sim_overview} summarizes the key characteristics of each setting.

\begin{table}[H]
\centering
\begin{tabular}{llp{7cm}}
\toprule
Setting & Type of Heterogeneity & Challenge \\
\midrule
1 & Continuous coefficient variation & Smooth gradient in coefficients and covariates \\
2 & High-dimensional continuous variation & High noise ($p=100$) \\
3 & Discrete subgroups with confounders & Minority subgroup (20\%) with spurious covariate shifts \\
4 & Overlapping soft clusters & Three groups with blended membership \\
\bottomrule
\end{tabular}
\caption{\em Overview of simulation settings.}
\label{tab:sim_overview}
\end{table}

\medskip

\noindent\textbf{Setting 1: Continuous coefficient variation.} 
We generate observation-specific coefficients that vary smoothly along a latent gradient. Each observation $i$ has a latent position $z_i \sim \text{Uniform}(-1, 1)$, and coefficients are defined as a convex combination:
\[
\bm{\beta}_i = w_i \bm{\beta}_0 + (1-w_i) \bm{\beta}_1, \quad \text{where } w_i = \frac{z_i + 1}{2}.
\]
Here, $\bm{\beta}_0 = (3, 3, 3, 3, 0, \ldots, 0)^\top$ and $\bm{\beta}_1 = (-2, -2, -2, -2, 0, \ldots, 0)^\top$. Thus, observations with $z_i = -1$ have coefficients $\bm{\beta}_0$, those with $z_i = 1$ have coefficients $\bm{\beta}_1$, and intermediate values smoothly interpolate between them. To create covariate distribution shifts correlated with the coefficient variation, we add $z_i$ to features 1--4: $\bm{x}_{i,j} \leftarrow \bm{x}_{i,j} + z_i$ for $j \in \{1,2,3,4\}$. This setting tests whether attention lasso can adapt to a continuous spectrum of coefficient variation rather than discrete clusters.

\medskip

\noindent\textbf{Setting 2: High-dimensional continuous variation.} 
This extends Setting 1 to higher dimensions ($p=100$) with different coefficient patterns. We again use $z_i \sim \text{Uniform}(0, 1)$ to define smoothly varying coefficients, but now with $\bm{\beta}_0 = (3, 2, 1, 0, 0, 0, 0, \ldots, 0)^\top$ and $\bm{\beta}_1 = (-1, 0, 1, 2, 3, 2, 0, \ldots, 0)^\top$. Features 1--6 are shifted by $z_i$ to create a smooth gradient in the covariate distribution. This setting evaluates performance when the signal must be identified among many noise features.

\medskip

\noindent\textbf{Setting 3: Discrete subgroups with spurious heterogeneity.} 
We partition observations into a majority group (80\%) and a minority group (20\%). The majority has coefficients $\bm{\beta}_1$ with four non-zero entries in positions 1--4; the minority has coefficients $\bm{\beta}_2$ with four non-zero entries in positions 5--8. All non-zero coefficient values are drawn independently from $N(0, 1)$. To ensure the minority group is distinguishable in covariate space, we add a constant shift of 2 to features 1--8 for minority observations. Additionally, we introduce spurious covariate heterogeneity to make subgroup identification more challenging: for a random 50\% of all observations (cutting across both groups), we add random shifts drawn from $N(0, 1)$ to 10 randomly selected noise features. This creates covariate differences that are uncorrelated with the response and do not align with the true subgroup structure, testing whether attention lasso can focus on relevant heterogeneity.

\medskip

\noindent\textbf{Setting 4: Overlapping soft clusters.} 
We generate data with three overlapping subgroups having distinct coefficient vectors: $\bm{\beta}_1 = (3, 3, 2, 1, 0, \ldots, 0)^\top$, $\bm{\beta}_2 = (-2, 1, 3, 2, 0, \ldots, 0)^\top$, and $\bm{\beta}_3 = (1, -2, -1, 3, 0, \ldots, 0)^\top$. Rather than hard cluster assignments, each observation has soft membership in all three groups. We draw a latent position $z_i \sim \text{Uniform}(0, 1)$ and compute membership weights by evaluating three Gaussian density functions centered at 0.2, 0.5, and 0.8 (each with standard deviation 0.15) at $z_i$, then normalizing these densities to sum to 1. Each observation's coefficients are then a weighted blend: $\bm{\beta}_i = w_{i,1}\bm{\beta}_1 + w_{i,2}\bm{\beta}_2 + w_{i,3}\bm{\beta}_3$. We shift features 1--4 by $z_i$ to create covariate patterns correlated with the blended coefficient structure. This setting evaluates whether attention lasso can handle graded, overlapping group memberships rather than discrete clusters.

For each setting, we generate 100 independent train sets with test sets of equal size. Our results are summarized in Table~\ref{tab:sim_results} and Figure~\ref{fig:simplot}. When data are heterogeneous, attention lasso matches or outperforms the lasso as expected. We also find that attention lasso has performance close to that of random forest and XGBoost, but it is much more interpretable. Further, because attention lasso blends a baseline model (fit to all data, equally weighted) with the individual weighted model, it does not typically perform \textit{worse} than the lasso, whereas random forest and XGBoost have no such protection.

\begin{table}[htbp]
\centering
\begin{tabular}{l|rrrrr}
\toprule
Setting & Attention & LightGBM & XGBoost & RF & KNN \\
\midrule
1 & 53.0 (0.7) & 49.1 (0.7) & 49.0 (0.7) & 52.2 (0.5) & \textbf{56.9 (0.5)} \\
2 & \textbf{1.9 (0.4)} & -15.5 (1.3) & -36.4 (2.0) & -18.1 (1.3) & -22.2 (1.6) \\
3 & \textbf{5.8 (0.8)} & 0.2 (1.2) & -8.5 (1.8) & 1.2 (1.3) & -15.9 (2.0) \\
4 & 11.0 (0.9) & 6.2 (0.7) & 2.3 (1.5) & \textbf{11.5 (0.6)} & 11.5 (0.8) \\
\bottomrule
\end{tabular}
\caption{\em Mean (SE) of relative improvement (\%) over the lasso for attention lasso, LightGBM, XGBoost, random forest, and KNN across four data-generating settings (100 simulations each). Simulation details are described in Section~\ref{sec:examples:sims}.}
\label{tab:sim_results}
\end{table}

\begin{figure}[H]
    \centering
    \includegraphics[width=\linewidth]{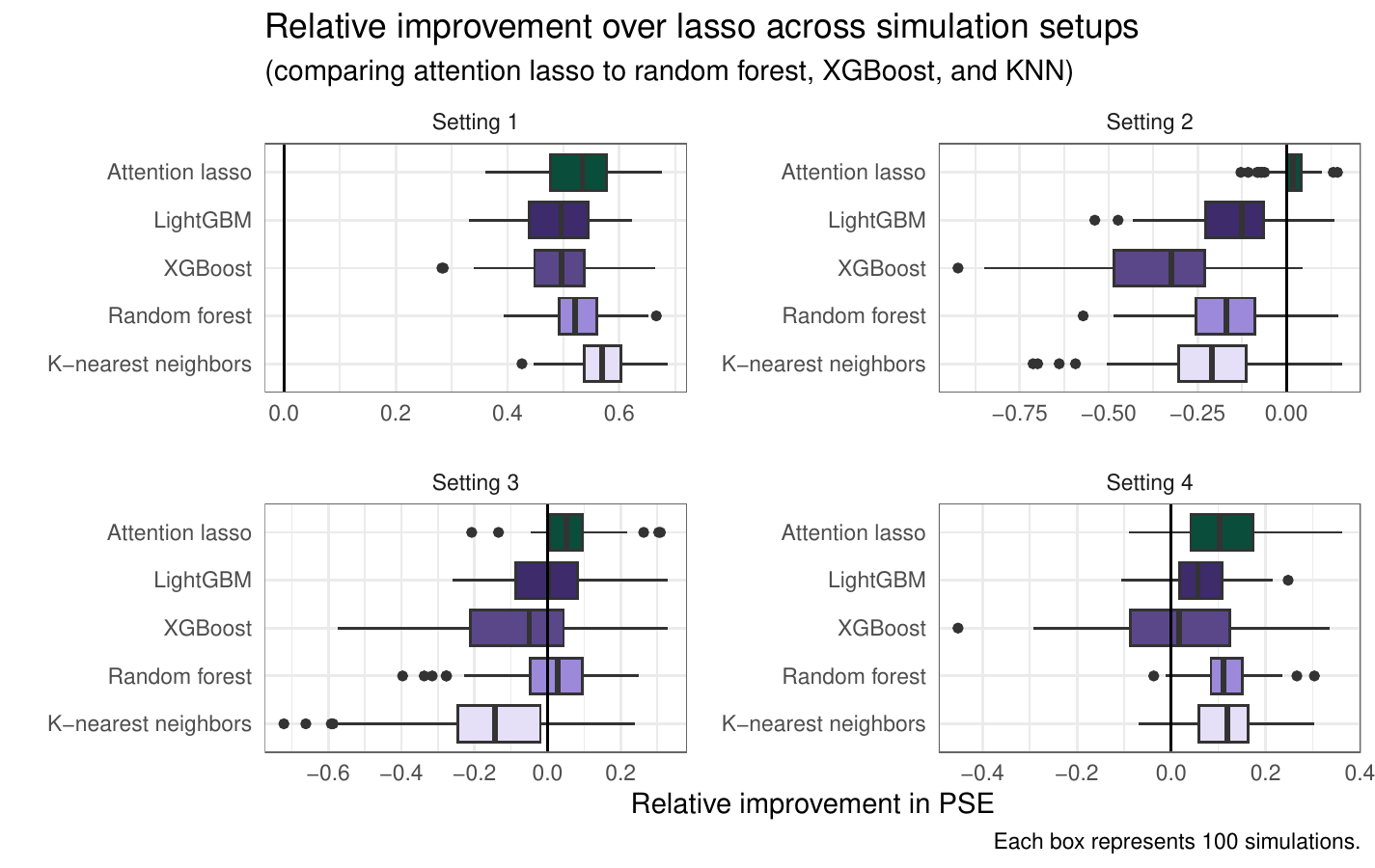}
    \caption{\em Simulation results described in Section~\ref{sec:examples:sims} and summarized in Table~\ref{tab:sim_results}. Across a variety of simulations, attention lasso typically (1) matches or improves on the lasso and (2) is competitive with more complex models when they perform well. The vertical line at $x = 0\%$ indicates no performance difference from the lasso, and values to the right indicate better performance.}
    \label{fig:simplot}
\end{figure}

\section{Background: attention and in-context learning}
\label{sec:background}
The  work in this paper was inspired by self-attention and in-context learning, which in many settings offer impressively accurate predictions, but are not easy to interpret. We aimed to develop a method that retains interpretability and improves predictive performance relative to other common supervised learning methods. Here, we offer background on self-attention and in-context learning before summarizing their relationship to attention for supervised learning. 

\subsection{Self-attention}
\label{sec:self-attention}
We describe \textit{self-attention}~\cite{vaswani2017attention} in the context of natural language processing. Self-attention assigns weights to reveal relationships between words within a sentence. For example, in the sentence ``Harvey played with Jonny on the swings before he went home'', the self-attention mechanism should assign a large weight from the word ``he'' to ``Harvey'' to resolve the reference. 

Concretely, suppose our 11-word example sentence is written as a matrix \(\bm{X} \in \mathbb{R}^{11 \times p}\), where each row is a word embedding in \(p\) dimensions. The $i^\text{th}$ row of \(\bm{X}\) corresponds to the $i^\text{th}$ word in the sentence; in our example, the first row is the embedding for ``Harvey'', the second ``played'' and so on. Then the attention mechanism is written as:
\begin{equation}
    \text{Attention}\left(\bm{Q}, \bm{K}, \bm{V}\right) = \text{softmax}\left(\frac{\bm{Q} \bm{K}^T}{\sqrt{d_k}} \right) \bm{V}, \\
\end{equation}
where
\begin{align*}
    \bm{Q} &= \bm{X} \bm{W}_q \quad \left(\bm{W}_q \in \mathbb{R}^{p \times d_k}\right)\\
    \bm{K} &= \bm{X} \bm{W}_k \quad \left(\bm{W}_k \in \mathbb{R}^{p \times d_k}\right) \\
    \bm{V} &= \bm{X} \bm{W}_v \quad \left(\bm{W}_v \in \mathbb{R}^{p \times d_v}\right) \\
    \text{hyperparameters } & d_k, d_v \in \mathbb{N}. 
\end{align*}
The matrices $Q, K$ and $V$ are called ``query'', ``key'' and ``value'' matrices: they represent three different encodings of the data. This allows for non-symmetrical relationships. That is, the attention from word $i$ to word $j$ can differ from the attention from word $j$ to word $i$.\footnote{In the sentence ``Harvey played with Jonny on the swings before he went home,'' ``he'' may attend strongly to ``Harvey'', while ``Harvey'' may attend more strongly to ``played''.}

This concise notation obscures the fact that attention uses a weighted similarity matrix \(\bm{X} \bm{W} \bm{X}^T\):
\begin{align}
    \text{Attention}\left(\bm{Q}, \bm{K}, \bm{V}\right) &= \text{softmax}\left(\frac{\bm{Q} \bm{K}^T}{\sqrt{d_k}}\right) \bm{V}, \\
    &= \text{softmax}\left(\frac{\bm{X} \bm{W}_q \bm{W}_k^T \bm{X}^T}{\sqrt{d_k}}\right) \bm{X} \bm{W}_v, \\
    &= \text{softmax}\left(\frac{\bm{X} \bm{W} \bm{X}^T}{\sqrt{d_k}}\right) \bm{X} \bm{W}_v, \label{eq:simplified_attention}
\end{align}
where \(\bm{W} = \bm{W}_q \bm{W}_k^T\) for simplicity. Element $(i, j)$ of \(\bm{X} \bm{W} \bm{X}^T\) is a weighted dot product between the projected word embeddings for word $i$ and word $j$. Now it is clearer to see that the attention mechanism performs two important tasks. First, it identifies \textit{weights} relating each word (row of \(\bm{X}\)) to each other word:
\begin{equation}
    \text{softmax}\left(\frac{\bm{X} \bm{W} \bm{X}^T}{\sqrt{d_k}}\right).
\end{equation}

The expression \(\bm{X} \bm{W} \bm{X}^T\) is a matrix where the first row encodes the similarity of ``Harvey'' to every word in the sentence, the second row encodes similarity from ``played'' to each other word, and so on. Applying row-wise softmax makes each row sum to 1, and therefore each row can be thought of as a set of weights.

The second step of attention multiplies our weights by \(\bm{X} \bm{W}_v\):
\begin{equation}
    \text{softmax}\left(\frac{\bm{X} \bm{W} \bm{X}^T}{\sqrt{d_k}}\right) \bm{X} \bm{W}_v,
\end{equation}
thereby rewriting each row of \(\bm{X}\) --- each word --- as a weighted sum of the words in the sentence, and multiplying this by weight matrix \(\bm{W}_v\).

Importantly, there is no global, ``one-size-fits-all'' representation of each word. Rather, self-attention enables each word's representation to be dynamically informed by the entire context of the sentence.

\subsection{In-context learning: attention from the test set to the train set}
In-context learning (ICL)~\cite{dong-etal-2024-survey} is a method whereby models adapt their predictions based on an input query which includes both train and test data without updating their parameters. For example, an ICL query could be:

\begin{center}
\begin{tabular}{l l}
Harvey is sliding on the & slide.\\
Martha is swinging on the & swing. \\
Harvey and Martha are playing in a & ?
\end{tabular}
\end{center}
We have given our model example sentences and their completing words (``demonstration pairs''), and we expect our model to reply with ``playground''. 

More generally, suppose we have $k$ demonstration pairs $(\bm{S}_1, \bm{y}_1), \ldots, (\bm{S}_k, \bm{y}_k)$, where $\bm{S}_i$ is a sentence and $\bm{y}_i$ is its label or completion, and a new test sentence $\bm{S}^*$. The prompt is:
\[
  (\bm{S}_1, \bm{y}_1), \ldots, (\bm{S}_k, \bm{y}_k), (\bm{S}^*, \underline{\hspace{0.2cm}})
\]
and the goal is to predict $\bm{y}^*$. We will again use attention; now, we use attention to relate $\bm{S}^*$ to the demonstration examples $\bm{S}_1,  \ldots, \bm{S}_k$. We begin with an embedding that maps each \textbf{sentence} $\bm{S}_i$ to a \textbf{vector} $\bm{x}_i \in \mathbb{R}^p$. We define $\bm{X} \in \mathbb{R}^{k \times p}$ as the matrix where each row is one of our demonstration examples and $\bm{x}^* \in \mathbb{R}^p$, our test sentence. Then we use attention as in Section~\ref{sec:self-attention}, however now we also consider attention from $\bm{x}^*$ to $\bm{X}$:

\begin{equation}
\text{In-context attention}(\bm{x}^*, \bm{X}, \bm{y}) = \text{softmax}\left( \frac{\bm{X}_{\text{ICL}} \bm{W} \bm{X}_{\text{ICL}}^T}{\sqrt{d_k}} \right) \bm{X}_{\text{ICL}} \bm{W}_v
\end{equation}
where
\begin{align*}
\bm{x}^* &\in \mathbb{R}^p \quad \text{is the test sentence embedding} \\
\bm{X} &\in \mathbb{R}^{k \times p} \quad \text{are the $k$ demonstration sentence embeddings} \\
\bm{y} &\in \mathbb{R}^{k \times p} \quad \text{are the $k$ demonstration response embeddings} \\
\bm{X}_{\text{ICL}} &\in \mathbb{R}^{(k+1) \times 2 p} = \begin{pmatrix}
\bm{X} & \bm{y} \\
\bm{x}^* & \bm{0}
\end{pmatrix}\\
\bm{W} &\in \mathbb{R}^{2p \times 2p} \quad \text{is the learned projection matrix $\bm{W}_q \bm{W}_k^T$} \\
\bm{W}_v &\in \mathbb{R}^{2p \times d_v} \quad \text{is a learned value projection matrix} \\
d_k &\in \mathbb{N} \quad \text{is a hyperparameter.}
\end{align*}
We have simplified $\mathbf{W} = \mathbf{W}_q \mathbf{W}_k^T$ as in Equation~\ref{eq:simplified_attention}.

As before, attention performs two steps: 
\begin{enumerate}
    \item constructing a weight matrix, $\text{softmax}\left(  \frac{\bm{X}_{\text{ICL}} \bm{W} \bm{X}_{\text{ICL}}^T}{\sqrt{d_k}} \right)$, the $k+1^\text{th}$ row of which relates test observation $\bm{x}^*$ to each of the demonstration observations, and
    \item multiplying these weights by $\bm{X}_{\text{ICL}} \bm{W}_v$, where the final row is a weighted combination of the demonstration observations and itself, multiplied by $\bm{W}_v$. 
\end{enumerate}  
The final vector corresponding to the test query $\bm{x}^*$ is our prediction, and can be written as:
\begin{equation}
\text{softmax}\left( \frac{\begin{pmatrix} \bm{x}^* & \bm{0} \end{pmatrix} \bm{W} \bm{X}_{\text{ICL}}^T}{\sqrt{d_k}} \right) \bm{X}_{\text{ICL}} \bm{W}_v.
\end{equation}
The attention weights, $\text{softmax}\left( \frac{\begin{pmatrix} \bm{x}^* & \bm{0} \end{pmatrix} \bm{W} \bm{X}_{\text{ICL}}^T}{\sqrt{d_k}} \right)$, closely mirror our attention weights, $\text{softmax} \left( \bm{x}^*\bm{W} \bm{X}^T \right).$

Note that in-context learning starts with a pretrained model and \textit{does not update its weights at test time}. Rather, it uses the attention mechanism to represent the test observation in context of the demonstration examples.

\subsection{Attention, in-context learning, and supervised learning}
The first component of attention takes the form 
\begin{equation} 
\text{softmax}\left(\bm{X} \bm{W} \bm{X}^T\right),
\label{eqn:simpAttn}
\end{equation}
and defines a similarity matrix across rows of $\bm{X}$ according to $\bm{W}$. Typically, attention appears in complex models, involving multi-head attention (multiple attention mechanisms in the same layer of a neural network, each with different weights) and sequential attention (many layers of a neural network will use attention heads), making $\bm{W}$ difficult to interpret directly.

Here we use just a \textit{single} attention head.
Initially, we tried form (\ref{eqn:simpAttn}) with a diagonal $\bm W$ for this head:  the similarity weights provide an intuitive soft matching, relative to the feature importances on the diagonal of $\bm W$. It is therefore natural to consider $\bm{w}^* = \text{softmax}\left(\bm{x}^* \bm{W} \bm{X}^T\right)$ as interpretable weights for localized, weighted model fitting for $\bm{x}^*$.

But to make our procedure more powerful, we instead use random forest proximity. Though this appears very different from the neural network attention mechanism, it can approximate its behavior more closely by exploiting nonlinearities.

At first glance, fitting a model for $\bm{x}^*$ distinguishes our approach from ICL, which does not update model weights. This connects to recent work showing that ICL with transformers can emulate gradient descent~\cite{von2023transformers, ren2024towards, deutch2023context}, and in some cases, corresponds exactly to fitting a linear model to the context. Thus, our weighted model fitting step can be viewed as an explicit analog of the optimization that is done implicitly by ICL.

%%%%%%%%%%%%%%%%%%%%%%%%
\section{Other related work}
\label{sec:other_related}

\subsection{Kernel methods and local regression}
Our attention approach is closely related to kernel-based methods that weight training observations by their similarity to a test point. We compare the softmax weighting scheme used in the attention mechanism with the model fitting approaches of kernel regression and locally weighted regression.

\subsubsection{Nadaraya--Watson kernel regression and attention}
The Nadaraya--Watson (N--W) kernel estimator~\cite{nadaraya1964estimating,watson1964smooth} with a Gaussian kernel defines the weight between a query point $\bm{x}^*$ and training point $\bm{x}_i$ as:
\[
  \frac{\exp\left(-\frac{\|\bm{x}^* - \bm{x}_i\|^2}{2\sigma^2}\right)}{\sum_{j=1}^n \exp\left(-\frac{\|\bm{x}^* - \bm{x}_j\|^2}{2\sigma^2}\right)},
\]
where $\sigma$ is a bandwidth parameter. Attention weights take an analogous softmax form:
\[
 \frac{\exp(\bm{x}^{*T} \bm{W} \bm{x}_i)}{\sum_{j=1}^n \exp(\bm{x}^{*T} \bm{W} \bm{x}_j)},
\]
where $\bm{W}$ is a learned weight matrix. Importantly, the two approaches define similarity quite differently: the attention mechanism learns $\bm{W}$ using the response $\bm{y}$, while N-W regression defines similarity based only on geometric distance in covariate space. Additionally, N-W regression uses these weights to compute a weighted average of responses, $\hat{y}^* = \sum_i w_i y_i$, rather than performing continued model fitting (like neural networks and attention lasso).

\subsubsection{Locally weighted regression}

Locally weighted regression (LWR), including methods like LOESS~\cite{cleveland1988locally}, is structurally similar to attention lasso. LWR fits a separate weighted linear (or polynomial) model at each prediction point:
\[
\hat{\bm{\beta}}^* = \argmin_{\beta} \sum_{i=1}^n w_i (y_i - \bm{x}_i^T \bm{\beta})^2,
\]
where the weights $w_i$ typically decrease with Euclidean distance from $\bm{x}^*$. 

Drawing on the principle of supervised similarity from neural attention,  our method extends locally weighted regression by defining similarity in a \emph{supervised} manner, ensuring that two observations are considered similar when they are close with respect to features and feature interactions that relate $\bm{X}$ to $\bm{y}$. Our ridge-based attention uses feature importance to weight distances, while random forest proximity finds nonlinear relationships that Euclidean distance cannot represent. 

Additionally, attention lasso also incorporates sparsity through $L_1$ regularization, fitting models of the form:
\[
\hat{\bm{\beta}}^* = \argmin_{\beta} \sum_{i=1}^n w_i (y_i - \bm{x}_i^T \bm{\beta})^2 + \lambda \|\bm{\beta}\|_1,
\]
using a shared penalty parameter $\lambda$ across all test points to encourage consistent model complexity. Finally, our approach blends the global (baseline) model with the local (attention) model using a mixing parameter $m$ chosen through cross-validation, allowing the data to reveal the extent to which local adaptation improves over the single global model.

\subsection{Customized training}
Customized training~\cite{powers2016customized} addresses data heterogeneity by first partitioning test observations into clusters and then fitting separate models within each cluster using nearby training points. This approach is most appropriate when subgroups are clearly defined and the chosen clustering aligns with the underlying data structure. However, customized training has limitations. First, it requires pre-specifying the number of clusters and making hard assignments of observations to groups. Second, because clustering is performed on test data, it cannot leverage information from the response $\bm{y}$ to define clusters meaningful for predictin $\bm{y}$. 
%Instead, it defines ``nearness'' using all features equally, often relying on information that is irrelevant for prediction. 
In contrast, our method adaptively determines the appropriate training set for each prediction point through a soft weighting mechanism that is explicitly informed by the relationship between features and the response.

\section{Attention for time series and spatial data}
\label{sec:extensions}

Thus far, we have defined similarity between a test and train point using random forest proximity on their covariates and the response. Now, we consider data where there are relationships between observations; for example, time series data consists of measurements of the same variables at different points in time, and likewise for spatial data across 2- or 3-dimensional coordinates. This is somewhat similar to \emph{retrieval} for neural networks. For simplicity, we first discuss the time series case.

With time series data, we define similarity not only with regards to the current covariates, but also to their context. For example, a feature value may be interpreted differently depending on whether its recent history has been increasing or decreasing. To address this, we fit a random forest using the current features as well as their lagged values. Our definition of ``similarity'' therefore prioritizes observations with similar values \emph{and similar contexts}. During model fitting, we then have the choice to include only current values or current and lagged values, whichever are more appropriate for the task. The same principle applies to spatial data. To define similarity across pixels in images, we fit our random forest using features from each individual pixel and its nearest neighbors. 

There are many possible modifications to the similarity definition, appropriate for different datasets:
\begin{itemize}
\item \textbf{Spatial symmetry:} In image data, the relative position of neighboring pixels may not matter—only that two pixels are adjacent. A simple approach is to use the mean and standard deviation of neighbors rather than their raw feature values.
\item \textbf{Categorical context:} In single-cell spatial data, we may define similarity using a cell's measured covariates together with neighboring cell \emph{types} rather than their raw features.
\item \textbf{Distance weighting:} For time series, we choose a lag; for spatial data, we choose a number of neighbors. These could be extended to a kernel weighting that downweights observations further away in time or space.
\end{itemize}

\subsection{Example: time series regression}
We consider a time series dataset $\bm{X}, \bm{y}$ where each row of $\bm{X}$ and the corresponding entry of $\bm{y}$ represent the same variables recorded across time. For instance, in finance, $\bm{y}$ might denote the return at time $t$ and $\bm{X}$ includes variables representing market indicators from previous periods. 

Time series forecasting typically faces the challenge of limited observations similar to the current moment. We leverage the attention mechanism to identify historically relevant training examples, both in terms of current features (measured at time $t$) and recent history (lagged indicators from time $< t$ included as columns of $\bm{X}$). 

We applied attention lasso to the US economic change dataset (\texttt{us\_change}) from the fpp3~\cite{fpp3} package in R, which contains quarterly percentage changes in personal consumption expenditure, disposable income, production, savings, and unemployment. This dataset spans 1972-2019 with quarterly measurements, totaling $n = 188$ rows. For baseline lasso, random forest, XGBoost and K-nearest neighbors, we included 10 lagged values of each variable, resulting in $p = 54$ features. For attention lasso, we use the lagged features only to compute random forest similarity; the baseline and attention models are fitted with the original 4 features only, and the random forest uses the most recent 1 year (lag = 4) to define similarity. The response variable is the percentage change in consumption.

We vary the training set size from 50\% to 90\% of the data and test on the remaining observations, and as before, we compare baseline lasso to attention lasso, random forest, XGBoost and K-nearest neighbors. To choose hyperparameters, we use 5-fold cross-validation split by time: for each fold $k \geq 2$, we train on all observations from folds 1 through $k-1$ and test on fold $k$.

In this example, attention lasso performed much better than its competitors when trained with at least $70\%$ of the data, but with smaller datasets, random forest with lagged features had the best performance. 

\begin{table}[H]
\centering
\begin{tabular}{r|rrrr|l}
  \toprule
Train Fraction & Attention & RF & XGBoost & KNN &  \makecell[l]{Mixing \\(0: base, 1: attn)} \\ 
  \midrule
0.5 & 3.0 & \textbf{64.4} & 63.1 & 61.0 & 1.0 \\ 
  0.6 & -16.4 & \textbf{48.5} & 38.5 & 34.4 & 1.0 \\ 
  0.7 & \textbf{11.4} & -72.2 & -54.6 & -130.5 & 1.0 \\ 
  0.8 & \textbf{32.8} & -31.5 & -40.2 & -153.9 & 1.0 \\ 
  0.9 & \textbf{54.2} & -44.5 & -50.4 & -144.2 & 1.0 \\ 
  \midrule
  Mean (SE) & \textbf{17 (12.2)} & -7.1 (26.9) & -8.7 (24.7)& -66.6 (47.0) & 1 (0) \\ 
   \bottomrule
\end{tabular}
\caption{Relative improvement (\%) over lasso baseline with the time series dataset \texttt{us\_change} from the fpp3~\cite{fpp3} package. Positive values indicate better performance than lasso, negative indicate worse. We find that attention lasso performs best with longer history ($>70\%$ of data), and the CV-selected mixing values at 1 suggest the best performance is from the attention-weighted models.} 
\label{tab:timeseries_improvement}
\end{table}

\subsection{Example: spatial data}
Now we turn to a dataset of 45 DESI mass spectrometry images of prostate  tissues, taken from \citep{Banerjee}. Each pixel in a mass spectrometry image represents a location at which 1{,}600 molecular abundances were measured. Our goal is to assign labels to the pixels in each image using these 1{,}600 features. Of the 45 images, 17 correspond to tumor tissue (pixel labels $y = 1$), and 28 to normal tissue (pixel labels $y = 0$). We have 17{,}735 pixels total.

We compared lasso and attention lasso across ten train/test splits, with entire images assigned to either train or test. For attention weights, we defined pixel similarity by training the random forest on each pixel's features together with those of its 8 spatial neighbors. This provides local context analogous to the temporal context in time series.

Across the ten splits, attention lasso outperformed lasso: it had an average AUC of 0.65 relative to lasso's average of 0.59. In 3 splits, lasso only fit the null model, presumably due to pixel heterogeneity masking global signal. In 2 of these cases, attention lasso was still able to find signal in local models. Model performance is shown in Table~\ref{tab:spatial_results}, and an example is in Figure~\ref{fig:masspect}.

 \begin{table}[H]
  \centering
  \begin{tabular}{rccc}
  \toprule
  Split & Lasso & Attention lasso & Mixing \\
  \midrule
  1 & 0.500 & \textbf{0.718} & 1.0 \\
  2 & 0.731 & \textbf{0.746} & 0.2 \\
  3 & 0.425 & \textbf{0.475} & 1.0 \\
  4 & 0.669 & \textbf{0.675} & 0.2 \\
  5 & \textbf{0.680} & 0.678 & 1.0 \\
  6 & 0.500 & \textbf{0.750} & 0.2 \\
  7 & \textbf{0.543} & 0.539 & 0.0 \\
  8 & \textbf{0.500} & \textbf{0.500} & 0.0 \\
  9 & 0.684 & \textbf{0.685} & 0.0 \\
  10 & \textbf{0.698} & 0.696 & 0.3 \\
  \midrule
  Mean & 0.593 & \textbf{0.646} & 0.39 \\
  Std. err &  0.035  &  0.032 & 0.137 \\
  \bottomrule
  \end{tabular}
  \caption{\em AUC comparison across train/test splits for DESI mass spectrometry data, distinguishing prostate cancer tissue from normal.}
  \label{tab:spatial_results}
  \end{table}

\begin{figure}
    \centering
    \begin{subfigure}{0.43\textwidth}
      \includegraphics[width=\textwidth]{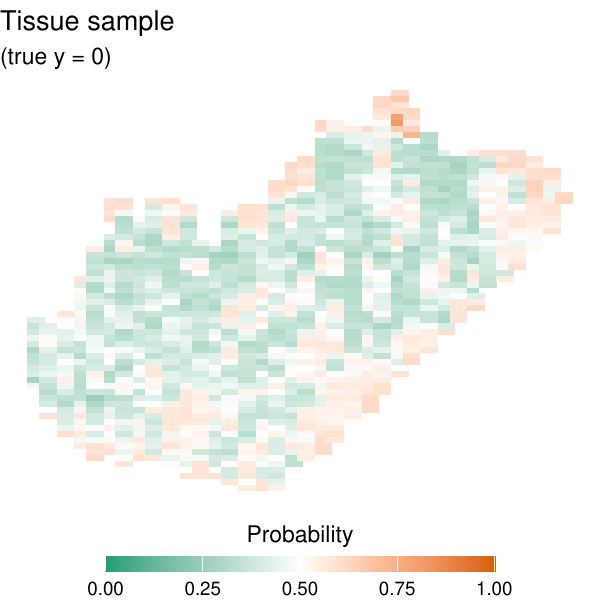}
    \end{subfigure}
    \hfill
    \begin{subfigure}{0.43\textwidth}
      \includegraphics[width=\textwidth]{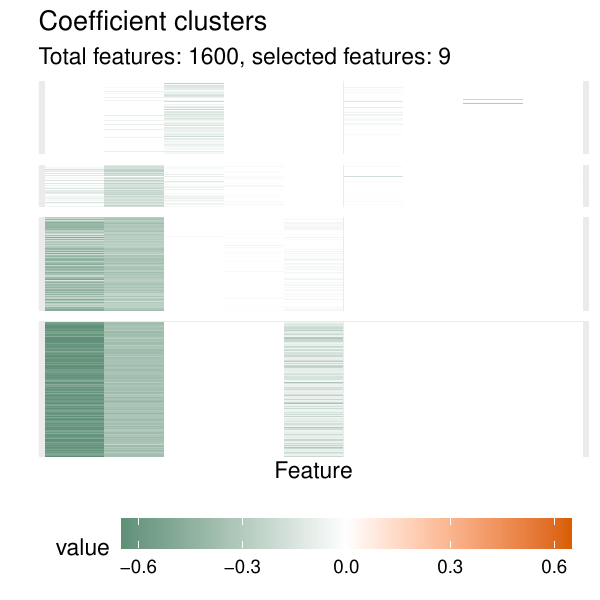}
    \end{subfigure}
    \caption{\em Example mass spectrometry image from split 1 with attention lasso mixing = 1 (chosen by cross-validation). Left: normal tissue sample with pixels colored by attention lasso model probability. Right: fitted model coefficients for the sample image.}
    \label{fig:masspect}
  \end{figure}

\section{Generalizations}
\label{sec:generalization}

\subsection{Attention for other machine learning methods}
So far we have focussed on the special case of the lasso, but the algorithm we have described applies to supervised learning methods beyond the lasso. The steps are: (1) fit a random forest to estimate attention weights, (2) fit a base learner, (3) fit a \emph{weighted} base learner for each test point and (4) blend the predictions of the models from (2) and (3). The base learner could be, for example, XGBoost or LightGBM. This is described in Algorithm~\ref{alg:attention_super} given earlier. 

Nonlinear models like LightGBM typically adapt well to data heterogeneity: they can fit localized models in a way that linear models cannot. However, attention can still be useful. The attention weights focus each test point's model on the most relevant training examples; as a result, the localized models may not need to be as complex. In the case of gradient boosted trees, for example, the local models may each be shallower and composed of fewer trees. This results in a more interpretable ensemble of models: first, the attention weights are useful for understanding similarity between test and train samples, and second, clustered feature importances across trees may identify a clustering of the data (Figure~\ref{fig:interpretability_plots_lgbm} shows an example). The importance heatmap reveals distinct feature importance patterns across clusters, suggesting meaningful subgroup structure.

\begin{figure}[H]
    \centering
    \begin{subfigure}{0.48\linewidth}
        \centering
        \includegraphics[width=\linewidth]{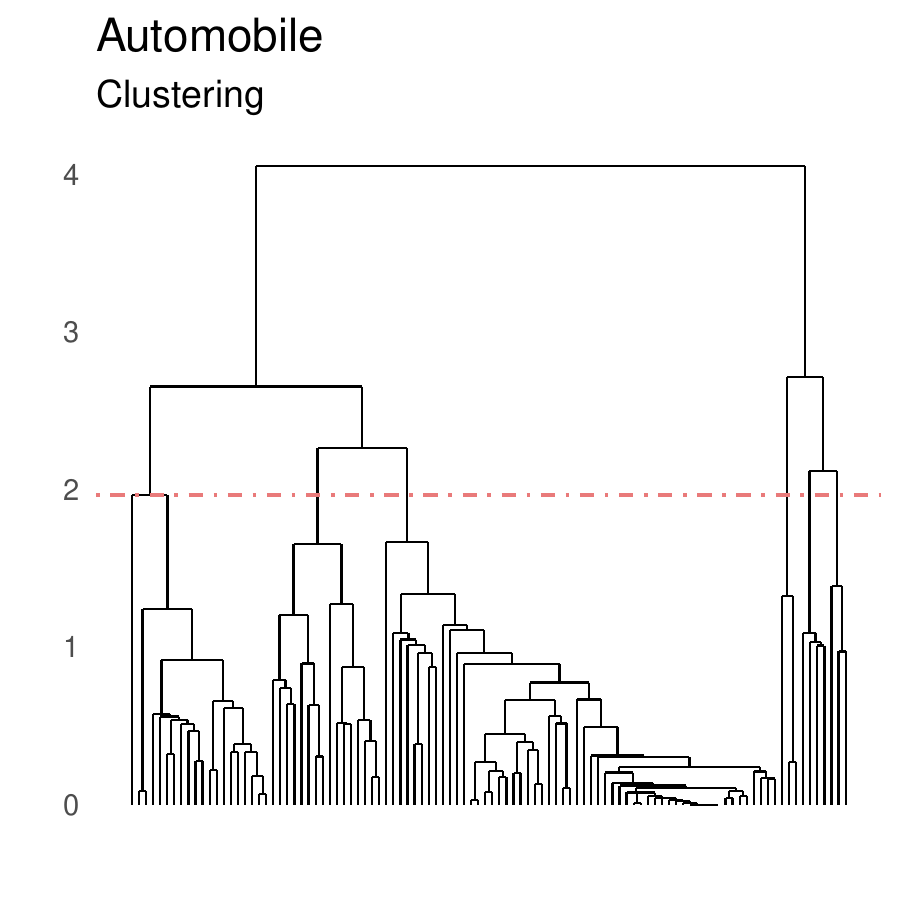}
        % \caption{\em Clustering of coefficients from protoclust.}
        % \label{fig:fig123}
    \end{subfigure}
    \hfill
    \begin{subfigure}{0.48\linewidth}
        \centering
        \includegraphics[width=\linewidth]{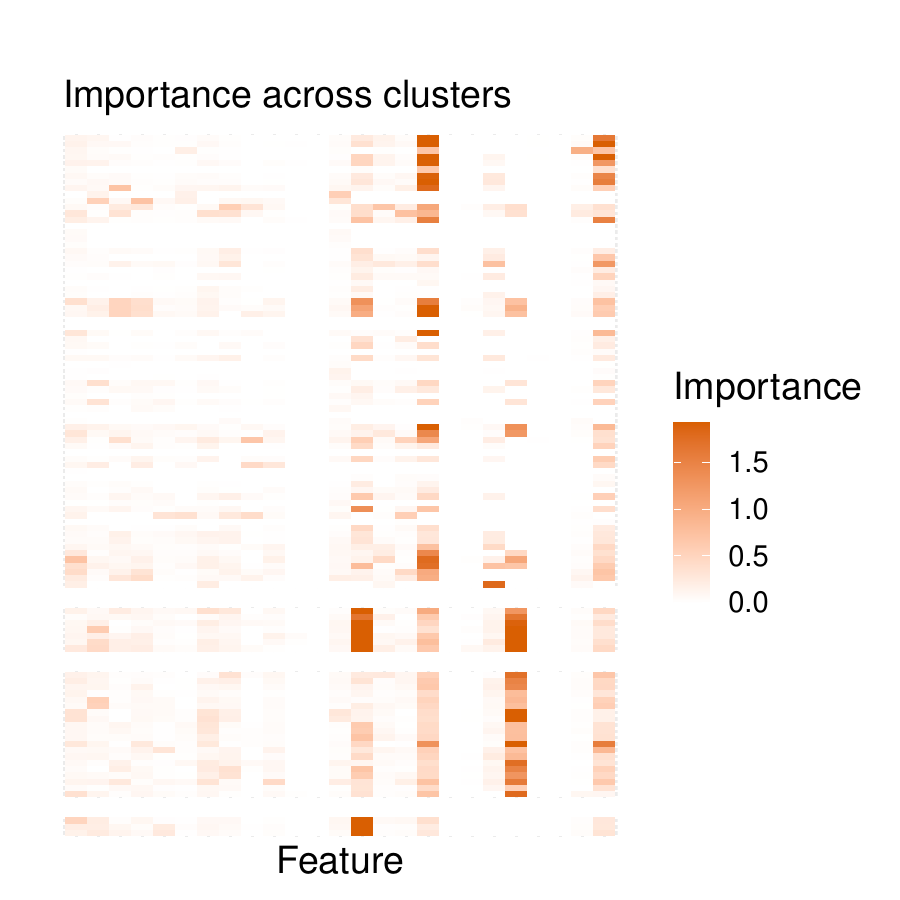}
        % \caption{\em Heatmap revealing differences in feature importances across clusters.}
        % \label{fig:fig223}
    \end{subfigure}
    
    \caption{\em Clustered feature importances for the Automobile dataset modeled with attention LightGBM. Models were trained using a random 50\% of data, and performance is reported using the remainder.}
    \label{fig:interpretability_plots_lgbm}
\end{figure}

\iffalse
\begin{algorithm}[H]
    \caption{\em Attention for supervised learning} 
    \textbf{Input:} Train set $\bm{X}, \bm{y}$, test set $\bm{X}^*$, mixing parameter $m \in [0, 1]$, base model class (e.g., LightGBM). \\
    \textbf{Output:} Predictions $\bm{\hat{y}}^*$ for $\bm{X}^*$, and fitted models for each row of $\bm{X}^*$. 
    \hrule
    \begin{enumerate}
    \item \textbf{Compute attention weights $\bm{\hat{A}}^*$} as in Algorithm~\ref{alg:attention_lasso}.
    
    \item \textbf{Fit baseline model:} Fit a baseline model to $\bm{X}, \bm{y}$ using the base model class. Predict for $\bm{X}^*$ to obtain $\bm{\hat{y}}^*_{\text{base}}$.
    
    \item \textbf{Fit attention-weighted models:} For $\bm{x}^*_i$, the $i^\text{th}$ row of $\bm{X}^*$, fit a weighted model to $\bm{X}, \bm{y}$ using weights $\bm{\hat{A}}^*_i$. Predict at $\bm{x}^*_i$ to obtain $\bm{\hat{y}}^*_{i, \text{attn}}$.
    
    \item \textbf{Combine predictions:} Return the weighted average
    \[
    \bm{\hat{y}}^* = (1 - m) \bm{\hat{y}}^*_{\text{base}} + m \bm{\hat{y}}^*_{\text{attn}}.
    \]
\end{enumerate}
\hskip 0.5in Note: the mixing parameter $m$ is selected through cross-validation, and we use the same hyperparameters for the baseline and attention models.
\label{alg:attention_sl}
\end{algorithm}
\fi
\subsection{Approximate weighted attention}

Repeated model fitting for complex models may be computationally prohibitive. We therefore present an approximate version of our attention algorithm that avoids refitting by keeping the original tree structure but updating the predictions within each leaf. Specifically, we replace the original leaf predictions with attention-weighted averages of the train data responses. The tree ensemble provides a partition of the space; the attention weights refine predictions within each partition. See Algorithm~\ref{alg:attention_sl_approx} for details.

\begin{algorithm}[H]
    \caption{\em Approximate attention for tree methods} 
    \textbf{Input:} Train set $\bm{X}, \bm{y}$, test set $\bm{X}^*$, mixing parameter $m \in [0, 1]$, base model class (e.g., XGBoost). \\
    \textbf{Output:} Predictions $\bm{\hat{y}}^*$ for $\bm{X}^*$, and fitted models for each row of $\bm{X}^*$. 
    \hrule
    \begin{enumerate}
    \item \textbf{Compute attention weights $\bm{\hat{A}}^*$} as in Algorithm~\ref{alg:attention_lasso}.
    
    \item \textbf{Fit baseline model:} Fit a baseline model to $\bm{X}, \bm{y}$ using the base model class. Predict for $\bm{X}^*$ to obtain $\bm{\hat{y}}^*_{\text{base}}$.
    
    \item \textbf{Estimate attention-weighted predictions:} For $\bm{x}^*_i$, the $i^\text{th}$ row of $\bm{X}^*$, traverse each of the trees to their terminal nodes. In each terminal node, compute a weighted average of the responses $\bm{y}$ for the training points in that node using attention weights. Then take an average of these predictions across all the trees to obtain $\bm{\hat{y}}^*_{\text{attn}}$.\\
   % \emph{This is different from the way gradient boosted trees are combined. We give all trees equal weight. If we bring this up, have to explain why.}
    
    \item \textbf{Combine predictions:} Return the weighted average
    \[
    \bm{\hat{y}}^* = (1 - m) \bm{\hat{y}}^*_{\text{base}} + m \bm{\hat{y}}^*_{\text{attn}}.
    \]
\end{enumerate}
\hskip 0.5in Note: the mixing parameter $m$ is selected  through cross-validation.
\label{alg:attention_sl_approx}
\end{algorithm}
\begin{remark}
{\em Weighting the trees}. Another possibility to step 3 above 
would be to fix the trees, and use their predictions as features.  Then fit a ridge or lasso model to these features, with  using attention weights as the observation weights.
\end{remark}

\subsection{Examples: machine learning model attention}

We return to the UCI datasets (Section~\ref{sec:examples:real}) to evaluate attention with LightGBM as the base learner. We compared LightGBM with max number of rounds 500 and no limit on max terminal leaves to three models trained with 100 rounds and 8 leaves max: (1) ``shallow'' LightGBM, (2) attention LightGBM and (3) approximate attention LightGBM. Our goal is to study whether attention weighting permits us to fit smaller, more interpretable models without sacrificing performance. All models used cross-validation with early stopping to determine the number of rounds. Results are displayed over 50 random train/test splits per dataset. We find that attention LightGBM with fewer, shallower trees typically has performance close to or above that of LightGBM and shallow LightGBM (Figure~\ref{fig:performance_plots_lgbm}), and attention LightGBM enables us to study feature importances across test points (Figure~\ref{fig:interpretability_plots_lgbm}).

\begin{figure}[H]

        \centering
        \includegraphics[width=.9 \linewidth]{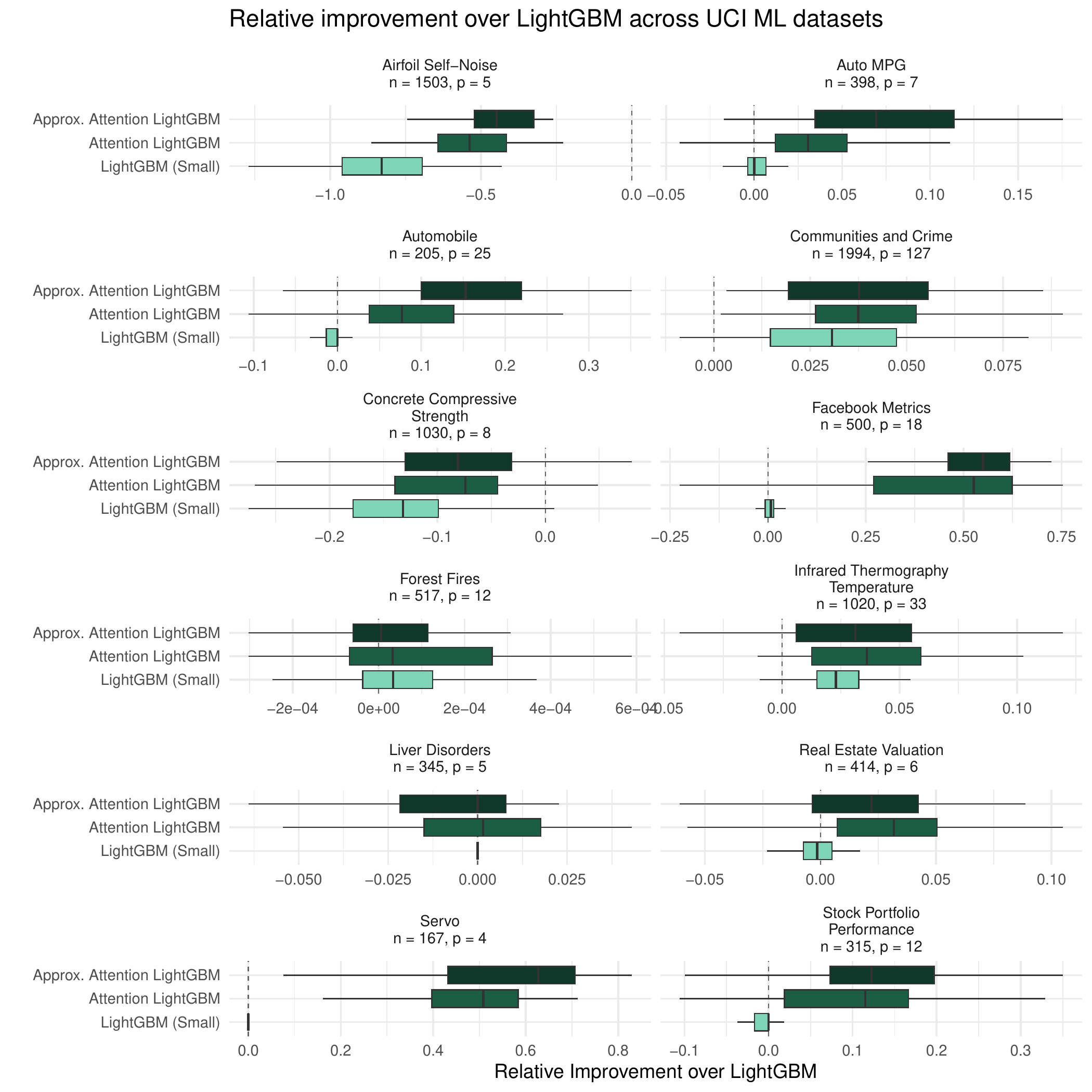}
        \caption{\em Experiment comparing attention LightGBM to LightGBM. As our baseline we trained LightGBM using its default parameters: 500 rounds and no limit on tree depth. This model corresponds to $x = 0$ in all plot facets. Then, using 100 rounds and a maximum of 8 leaves, we fit (1) LightGBM (small), (2) attention LightGBM, which fits a separate LightGBM model for each test point, and (3) approximate attention LightGBM, which uses a weighted average of training $y$ in each node. LightGBM (small) shows the effect on performance of training fewer, shallower trees. The other two boxes additionally show the effect of using attention. Values to the right of 0 indicate improvement over LightGBM, values to the left indicate worse performance.}
    \label{fig:performance_plots_lgbm}
\end{figure}

\section{Attention for longitudinal data drift}
\label{sec:datadrift}
We designed an algorithm to address a scenario common in industry and medicine, where a model is fitted at some initial time and then applied days, weeks, or months later. In these cases, refitting a complex model may be difficult for computational or administrative reasons, even though applying a stale model ignores distribution shift and hurts performance. We propose a middle ground where we use the original model, but adapt predictions using the more recent labeled data. 

Suppose we fit a gradient boosting model to data $(\bm{X}_1, \bm{y}_1)$ at time 1. At time 2, we observe new training data $(\bm{X}_2, \bm{y}_2)$ drawn from a shifted distribution and wish to predict on test points $\bm{X}_3$ from (or closer to) this new distribution. Specifically, we suppose the distribution of $\bm{X}$ has shifted, but the relationship $\bm{y} | \bm{X}$ has not changed much. We propose a method which does not require refitting a model using $\bm{X}_2, \bm{y}_2$, described in Algorithm~\ref{alg:attention_datadrift}.

\begin{algorithm}[H]
    \caption{\em Attention for data drift: attention-weighted residual correction}
    \textbf{Input:}\phantom{ut} Boosted tree model $\hat{f}$ fitted at initial training time (with $\bm{X}_1, \bm{y}_1$)\\ 
    \phantom{\textbf{Input:}ut }More recent data $\bm{X}_2, \bm{y}_2$\\
    \phantom{\textbf{Input:}ut }Prediction data $\bm{X}_3$ \\
    \textbf{Output:} Predictions $\bm{\hat{y}_3}$ for $\bm{X}_3$, and fitted models for each row of $\bm{X}_3$. 
    \hrule
    \begin{enumerate}
    \item \textbf{Compute attention weights $\bm{\hat{A}}^*$} from $\bm{X}_3$ to $\bm{X}_2$ using ``boosted tree similarity'': the fraction of trees in $\hat{f}$ in which two points land in the same terminal node. For each observation in $\bm{X}_3$, apply softmax to the similarity scores over $\bm{X}_2$ so that weights sum to 1.
    
    \item \textbf{Estimate residuals:} Compute residuals $\bm{r}_2 = \bm{y}_2 - \hat{f}(\bm{X}_2)$. For each test point, define $\hat{\bm{r}}_3$ as the attention-weighted average of $\bm{r}_2$.
    
    \item \textbf{Adjust predictions:} Compute final predictions as $\hat{\bm{y}}_3 = \hat{f}(\bm{X}_3) + \hat{\bm{r}}_3$.
\end{enumerate}
\label{alg:attention_datadrift}
\end{algorithm}

\begin{remark}
\small
{\em Intuition for Algorithm \ref{alg:attention_datadrift}}.
Our approach blends gradient boosting with kernel-weighted local averaging: the tree provides both the base prediction and the similarity weights, while the local averaging corrects the predictions using recent data.
\end{remark}

\begin{remark}
\small
{\em Extension for non tree-based learning algorithms:}
This method is designed for tree-based algorithms, but could in principle be applied to other learning methods. We need only a fitted model and a similarity measure. Tree-based algorithms conveniently provide a natural similarity measure via terminal node co-occurrence; for other methods, one could use a separate similarity measure such as Euclidean distance or a learned embedding.
\end{remark}

We studied the performance of our method under a scenario with covariate shift. We simulated covariate shift using a mixture of two distributions: in distribution A, all features are standard normal; in distribution B, features 6-10 have mean 2. Data are drawn from mixtures that shift over time: 10\% B at time 1 (model training), 90\% B at time 2 (adaptation data), and 95\% B at time 3 (prediction). The response $y$ is generated as $y = X\beta + X_1^2 - X_3^2 + (X_4 + X_5)^2 + \epsilon$, where $\beta$ has 20 nonzero entries out of 50 total (each $\pm 2$) and $\epsilon \sim N(0, \sigma^2) \text{ with } \sigma = 36$. We have $n = 300$ observations at training and $200$ at testing. We compared four modeling approaches:
\begin{enumerate}
  \item \textbf{Baseline}: Model trained and tested on time 1 data.
  \item \textbf{Refit}: Model trained on time 2 data and tested at time 3.
  \item \textbf{No adaptation}: Model trained on time 1 and tested at time 3.
  \item \textbf{Attention}: Model trained at time 1 with attention-weighted residual correction applied using time 2 data, tested at time 3.
\end{enumerate}
In all cases, we ran LightGBM using the \texttt{lightgbm} library in R, and using default hyperparameters. Across 50 simulations, we found that our approach recovered much of the performance lost due to data drift (Figure~\ref{fig:drift}).

\begin{figure}[H]
    \centering
        \includegraphics[width=0.6\linewidth]{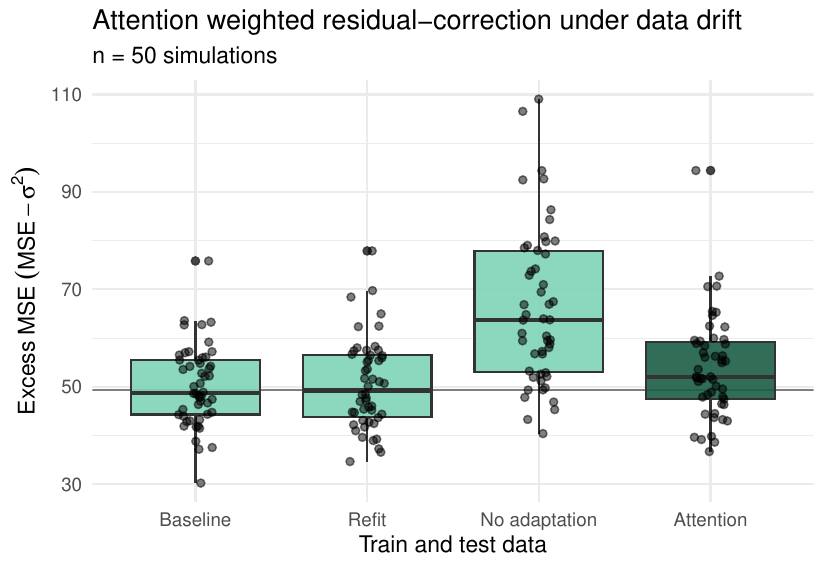}
        \caption{\em Model performance under data drift. ``Baseline'' is trained and tested at time 1; ``Refit'' is trained at time 2 and tested at time 3. ``No adaptation'' applies the time 1 model directly to time 3 data. ``Attention'' uses the time 1 model with attention-weighted residual correction from time 2 data to predict at time 3. Horizontal line shows median performance of Refit, which should be the best possible performance.}
    \label{fig:drift}
\end{figure}

\section{Discussion}
\label{sec:discussion}
This paper adapts the attention mechanism from neural networks to a general method for supervised statistical and machine learning methods for tabular data. By weighting training observations according to their \textit{supervised} similarity to each test point, our method fits models for heterogeneous data without requiring pre-specification of cluster structure. For the specific case of linear models, we found that attention lasso achieves lower prediction error than standard lasso under mixture-of-models settings, and empirical examples show that clustering the fitted coefficients reveals interpretable patterns that characterize data heterogeneity.

In self-attention, the attention weights $\bm W$ are learned end-to-end: they are optimized jointly with all other model parameters via backpropagation. In contrast, our approach first computes the weights through random forest proximity, and then separately fits a weighted model. We could instead have set up an optimization problem to jointly learn a weight matrix $\hat{\bm{W}}$ \emph{and} fit the prediction model. However, this non-convex objective is challenging to optimize in practice, and the dependence between the attention weights and the downstream model introduces additional complexity and the risk of poor local minima. For this reason, we instead prefer a two-step approach which can be implemented easily with standard, well-tested software.

We use random forest proximity to define the training weights, but note that any supervised weighting scheme that relates test points $\bm{x}^*$ to the training set $\bm{X}$ may be used. Random forest proximity is appealing because it naturally captures complex, nonlinear relationships that are useful for predicting $y$, and random forests themselves are usually fast and straightforward to fit.

We expect there are other interesting applications or extensions of our approach. For example, to estimate conditional average treatment effects, we could design a method based on the \emph{R-learner} (and R-lasso)~\cite{nie2021quasi}. The R-learner orthogonalizes treatment effect estimation by first regressing out nuisance functions for the average treatment effect and propensity scores, then fitting a penalized regression for the heterogeneous treatment effect. We could use the attention mechanism to model the heterogeneous treatment effect to localize or personalize the estimation.

Our approach has limitations. For large datasets, fitting a separate model for every test point may seem computationally prohibitive. However, each local model is simply a weighted version of the base learner and is typically fast to fit. Moreover, because the test models are independent, they can be fit in parallel across CPU nodes. Further, the computational cost is on par with leave-one-out cross-validation, where a separate model is fit for each \emph{train} observation. We also note that interpretability can become challenging when many subgroups are discovered, and we have not developed formal inference procedures (confidence intervals or hypothesis tests for the fitted coefficients), which is an important direction for future work. 
\medskip

{\bf Acknowledgements}. We would like to thank Ryan Tibshirani, Aaron Niskin, Isaac Mao, Caroline Kimmel, Trevor Hastie, Louis Abraham and Samet Oymak for helpful comments. R.T. was supported by the NIH (5R01EB001988-16) and the NSF (19DMS1208164).
\bibliographystyle{plainnat}
\bibliography{main}

\appendix
\section{Theoretical comparison: lasso and attention lasso}
\label{sec:theory}
\subsection*{Problem setup}

We consider a mixture-of-models setting where data are generated from two distinct linear models, and cluster membership is unobserved.

We assume that the 
data are generated as $y_i = \bm{x}_i^\top \bm{\beta}_{Z_i} + \varepsilon_i$ where:
\begin{itemize}
    \item $Z_i \in \{1, 2\}$ is the (unobserved) cluster membership with $P(Z_i = k) = \pi_k$ for $k \in \{1,2\}$ and $\pi_1 + \pi_2 = 1$.
    \item $\varepsilon_i \sim N(0, \sigma^2)$ are i.i.d. noise terms independent of $\bm{x}_i$ and $Z_i$.
    \item $\bm{\beta}_1, \bm{\beta}_2 \in \mathbb{R}^p$ are the true coefficient vectors with $\|\bm{\beta}_1 - \bm{\beta}_2\|_2 \geq \delta > 0$ for some constant $\delta$.
    \item Both $\bm{\beta}_1$ and $\bm{\beta}_2$ are $s$-sparse (at most $s$ non-zero entries).
\end{itemize}

We also assume that 
the design matrix $\bm{X} \in \mathbb{R}^{n \times p}$ has rows $\bm{x}_i$ that are i.i.d. draws. Let ${\bm{\Sigma}_k = E[\bm{x}_i \bm{x}_i^\top | Z_i = k]}$ for $k \in \{1,2\}$, and assume both $\bm{\Sigma}_1$ and $\bm{\Sigma}_2$ are positive definite.

\subsection*{Lasso MSE in this setting}

The lasso solves:
\begin{equation}
\hat{\bm{\beta}}_{\text{lasso}} = \argmin_{\bm{\beta}} \frac{1}{n}\|\bm{y} - \bm{X}\bm{\beta}\|_2^2 + \lambda\|\bm{\beta}\|_1
\end{equation}

We first characterize the mean squared error over the mixture distribution.

\begin{lemma}[Minimizer of mean squared error over the full population]
\label{lem:population_risk}
Define the mean squared error (MSE) over the mixture distribution as
\[
\MSE(\bm{\beta}) = E_{(\bm{x},Z)}[(y - \bm{x}^\top\bm{\beta})^2]
\]
where $(\bm{x}, Z) \sim P(X, Z)$ with $y = \bm{x}^\top \bm{\beta}_Z + \varepsilon$. Then:
\begin{equation}
\MSE(\bm{\beta}) = \sum_{k=1}^2 \pi_k (\bm{\beta}_k - \bm{\beta})^\top \bm{\Sigma}_k (\bm{\beta}_k - \bm{\beta}) + \sigma^2
\end{equation}
The unpenalized minimizer is given by:
\begin{equation}
\bm{\beta}^* = \left(\sum_{k=1}^2 \pi_k \bm{\Sigma}_k\right)^{-1} \left(\sum_{k=1}^2 \pi_k \bm{\Sigma}_k \bm{\beta}_k\right)
\label{eq:minimizer}
\end{equation}
\end{lemma}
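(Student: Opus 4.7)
The plan is to prove both claims by direct computation, conditioning on the latent cluster label $Z$ and then solving an unconstrained quadratic minimization. First I would apply the tower property, writing $\MSE(\bm{\beta}) = E_Z\bigl[E_{\bm{x},\varepsilon}[(y - \bm{x}^\top\bm{\beta})^2 \mid Z]\bigr]$. Conditional on $Z = k$, substitute $y = \bm{x}^\top \bm{\beta}_k + \varepsilon$ and expand
\begin{equation}
E\bigl[(\bm{x}^\top(\bm{\beta}_k - \bm{\beta}) + \varepsilon)^2 \mid Z=k\bigr] = E\bigl[(\bm{x}^\top(\bm{\beta}_k - \bm{\beta}))^2 \mid Z=k\bigr] + 2 E[\varepsilon \mid Z=k]\,E[\bm{x}^\top(\bm{\beta}_k-\bm{\beta}) \mid Z=k] + E[\varepsilon^2].
\end{equation}
The cross term vanishes because $\varepsilon$ is independent of $(\bm{x}, Z)$ with mean zero, the noise term contributes $\sigma^2$, and the quadratic form equals $(\bm{\beta}_k - \bm{\beta})^\top \bm{\Sigma}_k (\bm{\beta}_k - \bm{\beta})$ by the definition of $\bm{\Sigma}_k$. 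Averaging over $Z$ with probabilities $\pi_k$ then yields the claimed formula.

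Next, to identify the minimizer, I would differentiate the resulting expression with respect to $\bm{\beta}$. Since each summand is a convex quadratic in $\bm{\beta}$ and $\sigma^2$ does not depend on $\bm{\beta}$, the gradient is
\begin{equation}
\nabla_{\bm{\beta}} \MSE(\bm{\beta}) = -2 \sum_{k=1}^{2} \pi_k \bm{\Sigma}_k (\bm{\beta}_k - \bm{\beta}).
\end{equation}
Setting this to zero yields the normal equation $\bigl(\sum_k \pi_k \bm{\Sigma}_k\bigr)\bm{\beta} = \sum_k \pi_k \bm{\Sigma}_k \bm{\beta}_k$. Because $\bm{\Sigma}_1$ and $\bm{\Sigma}_2$ are each positive definite by assumption and $\pi_1, \pi_2 > 0$, the convex combination $\sum_k \pi_k \bm{\Sigma}_k$ is positive definite and therefore invertible; inverting gives the stated closed-form expression for $\bm{\beta}^*$. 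Strict convexity of the objective (its Hessian $2\sum_k \pi_k \bm{\Sigma}_k$ is positive definite) confirms that $\bm{\beta}^*$ is the unique global minimizer.

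There is no real obstacle here: the result is essentially a generalization of the standard weighted-least-squares population minimizer computation, made only slightly more elaborate by the mixture structure. The one step requiring mild care is justifying that the cross term involving $\varepsilon$ vanishes — this uses the stated independence of $\varepsilon_i$ from both $\bm{x}_i$ and $Z_i$, not merely marginal mean-zero-ness — and that $\sum_k \pi_k \bm{\Sigma}_k$ inherits positive definiteness from its summands so the inverse in \eqref{eq:minimizer} is well defined.
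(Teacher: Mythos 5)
Your proposal is correct and follows essentially the same route as the paper's proof: condition on $Z$, expand the square so the noise contributes $\sigma^2$ and the signal contributes the quadratic forms $(\bm{\beta}_k-\bm{\beta})^\top\bm{\Sigma}_k(\bm{\beta}_k-\bm{\beta})$, then set the gradient to zero. Your added remarks on the vanishing cross term and on positive definiteness of $\sum_k \pi_k\bm{\Sigma}_k$ guaranteeing a unique minimizer are points the paper leaves implicit, but they do not change the argument.
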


Equation~\ref{eq:minimizer} is a weighted combination of $\bm{\beta}_1$ and $\bm{\beta}_2$, where the weights depend on both the cluster proportions $\pi_k$ and the covariance structures $\bm{\Sigma}_k$. Importantly, $\bm{\beta}^* \neq \bm{\beta}_1$ or $\bm{\beta}_2$ (unless $\pi_k = 0$ or $\bm{\beta}_1 = \bm{\beta}_2$).

\begin{proof}
Expanding the MSE:
\begin{align*}
\MSE(\bm{\beta}) &= E_{(\bm{x},Z)}[(\bm{x}^\top \bm{\beta}_{Z} + \varepsilon - \bm{x}^\top\bm{\beta})^2] \\
&= E_{(\bm{x},Z)}[(\bm{x}^\top(\bm{\beta}_{Z} - \bm{\beta}))^2] + \sigma^2 \\
&= \sum_{k=1}^2 P(Z=k) \cdot E[(\bm{x}^\top(\bm{\beta}_k - \bm{\beta}))^2 | Z=k] + \sigma^2
\end{align*}

For each cluster, we compute:
\begin{align*}
E[(\bm{x}^\top(\bm{\beta}_k - \bm{\beta}))^2 | Z=k] 
&= E[(\bm{\beta}_k - \bm{\beta})^\top \bm{x} \bm{x}^\top (\bm{\beta}_k - \bm{\beta}) | Z=k] \\
&= (\bm{\beta}_k - \bm{\beta})^\top \bm{\Sigma}_k (\bm{\beta}_k - \bm{\beta})
\end{align*}

Therefore:
\[
\MSE(\bm{\beta}) = \sum_{k=1}^2 \pi_k (\bm{\beta}_k - \bm{\beta})^\top \bm{\Sigma}_k (\bm{\beta}_k - \bm{\beta}) + \sigma^2
\]

The minimizer satisfies:
\[
\nabla_{\bm{\beta}} \MSE(\bm{\beta}^*) = -2\sum_{k=1}^2 \pi_k \bm{\Sigma}_k(\bm{\beta}_k - \bm{\beta}^*) = 0
\]

Solving for $\bm{\beta}^*$ yields the stated result.
\end{proof}

\begin{theorem}[Irreducible Bias of Standard Lasso]
\label{thm:lasso_bias}
Under our assumed model, as $n \to \infty$ with $\lambda \sim \sqrt{\log p / n}$, the standard lasso estimator satisfies:
\[
\hat{\bm{\beta}}_{\text{lasso}} \to \bm{\beta}^* \quad \text{in probability}
\]
where $\bm{\beta}^*$ is given by Lemma~\ref{lem:population_risk}. Furthermore, for test points from cluster 1:
\begin{equation}
\bm{\beta}^* - \bm{\beta}_1 = \left(\sum_{k=1}^2 \pi_k \bm{\Sigma}_k\right)^{-1} \pi_2 \bm{\Sigma}_2 (\bm{\beta}_2 - \bm{\beta}_1)
\end{equation}
Assuming $\bm{\Sigma}_1 = \bm{\Sigma}_2$, we have $\|\bm{\beta}^* - \bm{\beta}_1\|_2 \geq c_1\delta$ for some constant $c_1 > 0$, and the expected squared prediction error satisfies:
\begin{equation}
\lim_{n \to \infty} \text{MSE}_{\text{lasso}}(Z^*=1) = \pi_2^2 (\bm{\beta}_2 - \bm{\beta}_1)^\top \bm{\Sigma}_1 (\bm{\beta}_2 - \bm{\beta}_1) \geq c_2 \delta^2
\end{equation}
for some constant $c_2 > 0$.
\end{theorem}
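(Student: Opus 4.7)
The theorem decomposes into three tasks: (i) consistency of the lasso to the population minimizer $\bm{\beta}^*$, (ii) derivation of the closed-form bias $\bm{\beta}^* - \bm{\beta}_1$ and its simplification under $\bm{\Sigma}_1 = \bm{\Sigma}_2$, and (iii) reading off the limiting MSE on cluster-$1$ test points and lower-bounding it. Steps (ii) and (iii) are essentially linear algebra once (i) is in hand; the only substantive effort is Step (i), because the regression is \emph{misspecified}: the true data-generating process is a mixture, not a single linear model.

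\textbf{Step 1 (Consistency).} My plan is to invoke a misspecified-lasso oracle inequality (in the spirit of van de Geer and B\"uhlmann). By Lemma~\ref{lem:population_risk}, the population MSE is a positive-definite quadratic with unique minimizer $\bm{\beta}^*$, and $\bm{\beta}^*$ is $(2s)$-sparse since it is a linear combination of the $s$-sparse vectors $\bm{\beta}_1, \bm{\beta}_2$. Writing $\bm{y} = \bm{X}\bm{\beta}^* + \bm{\eta}$ with $\eta_i = \bm{x}_i^\top(\bm{\beta}_{Z_i} - \bm{\beta}^*) + \varepsilon_i$, the residual vector has mean zero by the first-order condition for $\bm{\beta}^*$ and is sub-Gaussian under mild tail conditions on $\bm{x}_i$. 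A compatibility / restricted-eigenvalue condition on the empirical Gram $\bm{X}^\top \bm{X}/n$ holds with high probability because $\bar{\bm{\Sigma}} \equiv \sum_k \pi_k \bm{\Sigma}_k \succ 0$. With $\lambda \asymp \sqrt{\log p / n}$ the standard lasso bound then gives $\|\hat{\bm{\beta}}_{\text{lasso}} - \bm{\beta}^*\|_2 = O_p(\sqrt{s\log p / n}) \to 0$, establishing consistency. (In a fixed-$p$ regime the same conclusion follows more cheaply from a uniform law of large numbers plus $\lambda \to 0$, with no compatibility condition required.)

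\textbf{Step 2 (Bias formula).} Using the expression for $\bm{\beta}^*$ from Lemma~\ref{lem:population_risk},
\begin{align*}
\bm{\beta}^* - \bm{\beta}_1
&= \bar{\bm{\Sigma}}^{-1}\Bigl(\sum_k \pi_k \bm{\Sigma}_k \bm{\beta}_k - \bar{\bm{\Sigma}}\, \bm{\beta}_1 \Bigr)
 = \bar{\bm{\Sigma}}^{-1} \bigl(\pi_1 \bm{\Sigma}_1 (\bm{\beta}_1 - \bm{\beta}_1) + \pi_2 \bm{\Sigma}_2(\bm{\beta}_2 - \bm{\beta}_1)\bigr) \\
&= \pi_2\, \bar{\bm{\Sigma}}^{-1} \bm{\Sigma}_2 (\bm{\beta}_2 - \bm{\beta}_1),
\end{align*}
which is the stated identity. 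When $\bm{\Sigma}_1 = \bm{\Sigma}_2 = \bm{\Sigma}$, this collapses to $\bm{\beta}^* - \bm{\beta}_1 = \pi_2(\bm{\beta}_2 - \bm{\beta}_1)$, so $\|\bm{\beta}^* - \bm{\beta}_1\|_2 = \pi_2\|\bm{\beta}_2 - \bm{\beta}_1\|_2 \geq \pi_2 \delta$, and one may take $c_1 = \pi_2$.

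\textbf{Step 3 (MSE bound) and the main obstacle.} For a fresh test point $(\bm{x}^*, y^*)$ with $Z^* = 1$, the excess prediction error $E[(\bm{x}^{*\top}(\bm{\beta}_1 - \hat{\bm{\beta}}_{\text{lasso}}))^2 \mid Z^* = 1]$ converges, by Step 1 together with uniform integrability under the assumed second-moment bounds, to $(\bm{\beta}^* - \bm{\beta}_1)^\top \bm{\Sigma}_1 (\bm{\beta}^* - \bm{\beta}_1) = \pi_2^2(\bm{\beta}_2 - \bm{\beta}_1)^\top \bm{\Sigma}_1 (\bm{\beta}_2 - \bm{\beta}_1)$ after substituting the simplified Step 2 identity. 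Finally, $\bm{\Sigma}_1 \succ 0$ gives $\pi_2^2(\bm{\beta}_2 - \bm{\beta}_1)^\top \bm{\Sigma}_1 (\bm{\beta}_2 - \bm{\beta}_1) \geq \pi_2^2\lambda_{\min}(\bm{\Sigma}_1) \|\bm{\beta}_2 - \bm{\beta}_1\|_2^2 \geq \pi_2^2 \lambda_{\min}(\bm{\Sigma}_1)\delta^2$, so one may take $c_2 = \pi_2^2 \lambda_{\min}(\bm{\Sigma}_1)$. The main obstacle in the whole argument is justifying Step 1 cleanly in the high-dimensional misspecified setting: the residual $\bm{\eta}$ depends on $\bm{x}_i$ through $\bm{\beta}_{Z_i}-\bm{\beta}^*$, so the noise is heteroskedastic, and one must verify that $\|\bm{X}^\top \bm{\eta}/n\|_\infty \lesssim \sqrt{\log p/n}$ with high probability in order to drive the oracle inequality.
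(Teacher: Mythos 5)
Your proposal is correct and follows the same three-part skeleton as the paper's proof (consistency to $\bm{\beta}^*$, the bias identity, then the limiting MSE), and Steps 2 and 3 are essentially identical to the paper's algebra --- though you improve on the paper by exhibiting explicit constants $c_1 = \pi_2$ and $c_2 = \pi_2^2\lambda_{\min}(\bm{\Sigma}_1)$ where the paper only asserts their existence ``under standard conditions.'' The substantive difference is in Step 1: the paper disposes of consistency with a one-line appeal to ``standard results on lasso consistency'' and a citation to Zhao and Yu, which concerns sign consistency for a \emph{correctly specified} sparse linear model, whereas the present setting is misspecified (the truth is a mixture, and $\bm{\beta}^*$ is only a pseudo-true parameter). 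Your route through a misspecified-lasso oracle inequality is the more appropriate framework: you correctly observe that $\bm{\beta}^*$ is $(2s)$-sparse, that the effective noise $\eta_i = \bm{x}_i^\top(\bm{\beta}_{Z_i}-\bm{\beta}^*) + \varepsilon_i$ is orthogonal to the covariates by the first-order condition for $\bm{\beta}^*$ (note it is this orthogonality $E[\bm{x}\eta]=0$, rather than $E[\eta]=0$, that drives the argument), and that the real work is controlling $\|\bm{X}^\top\bm{\eta}/n\|_\infty$ for a heteroskedastic, covariate-dependent residual --- a sub-exponential concentration step that neither you nor the paper carries out in full, but which you at least identify honestly. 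In short, your proof buys a defensible justification of the one step the paper hand-waves, at the cost of needing tail conditions on $\bm{x}_i$ that the paper's assumptions do not explicitly state.
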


\begin{proof}
Standard results on lasso consistency (e.g., \cite{zhao2006}) show that for $\lambda \sim \sqrt{\log p / n}$, the penalty term vanishes relative to the empirical risk, and $\hat{\bm{\beta}}_{\text{lasso}} \to \bm{\beta}^*$ in probability.

To compute the bias, we have:
\begin{align*}
\bm{\beta}^* - \bm{\beta}_1 &= \left(\sum_{k=1}^2 \pi_k \bm{\Sigma}_k\right)^{-1} \left(\sum_{k=1}^2 \pi_k \bm{\Sigma}_k \bm{\beta}_k\right) - \bm{\beta}_1 \\
&= \left(\sum_{k=1}^2 \pi_k \bm{\Sigma}_k\right)^{-1} \left[\sum_{k=1}^2 \pi_k \bm{\Sigma}_k \bm{\beta}_k - \left(\sum_{k=1}^2 \pi_k \bm{\Sigma}_k\right) \bm{\beta}_1\right]\\
&= \left(\sum_{k=1}^2 \pi_k \bm{\Sigma}_k\right)^{-1} \left[\pi_1 \bm{\Sigma}_1 \bm{\beta}_1 + \pi_2 \bm{\Sigma}_2 \bm{\beta}_2 - \pi_1 \bm{\Sigma}_1 \bm{\beta}_1 - \pi_2 \bm{\Sigma}_2 \bm{\beta}_1\right]\\
&= \left(\sum_{k=1}^2 \pi_k \bm{\Sigma}_k\right)^{-1} \pi_2 \bm{\Sigma}_2 (\bm{\beta}_2 - \bm{\beta}_1)
\end{align*}

Since $\|\bm{\beta}_1 - \bm{\beta}_2\|_2 \geq \delta > 0$ and both $\bm{\Sigma}_1, \bm{\Sigma}_2$ are positive definite, under standard conditions we have $\|\bm{\beta}^* - \bm{\beta}_1\|_2 \geq c_1\delta$ for some $c_1 > 0$.

For the prediction error, with $\bm{x}^* \sim P(X | Z=1)$ and assuming $\bm{\Sigma}_1 = \bm{\Sigma}_2 = \bm{\Sigma}$ for simplicity:
\begin{align*}
\text{MSE}_{\text{lasso}} &= E\left[\left((\bm{x}^*)^\top(\hat{\bm{\beta}}_{\text{lasso}} - \bm{\beta}_1)\right)^2\right] \\
&\to E[(\bm{x}^*)^\top(\bm{\beta}^* - \bm{\beta}_1)]^2 \\
&= E[(\bm{x}^*)^\top \pi_2(\bm{\beta}_2 - \bm{\beta}_1)]^2 \\
&= \pi_2^2 (\bm{\beta}_2 - \bm{\beta}_1)^\top \bm{\Sigma} (\bm{\beta}_2 - \bm{\beta}_1) \geq c_2 \delta^2
\end{align*}
for some constant $c_2 > 0$.
\end{proof}

\begin{remark}
Theorem~\ref{thm:lasso_bias} establishes that lasso cannot eliminate bias regardless of sample size. The estimator converges to a weighted combination of $\bm{\beta}_1$ and $\bm{\beta}_2$ rather than to the cluster-specific $\bm{\beta}_1$.
\end{remark}

\subsection*{Results for attention lasso in this setting}

We consider the attention lasso with ridge weights.
For a test point $\bm{x}^* \in \mathbb{R}^p$, the attention weight between $\bm{x}^*$ and training observation $\bm{x}_i$ is defined as:
\begin{equation}
w_i(\bm{x}^*) = \frac{\exp(s_i)}{\sum_{j=1}^n \exp(s_j)}, \quad \text{where} \quad s_i = \bm{x}^{*\top} \bm{D}(\hat{\bm{\beta}}_{\text{ridge}}) \bm{x}_i
\label{def:attention}
\end{equation}
and $\bm{D}(\hat{\bm{\beta}}_{\text{ridge}}) = \text{Diag}(|\hat{\bm{\beta}}_{\text{ridge}}|)$ is a diagonal matrix with entries given by the absolute values of ridge regression coefficients fit to the full training data.
%\end{definition}

We require three additional assumptions for successful cluster separation:

\iffalse
\begin{assumption}[Non-cancellation of coefficients]
\label{ass:non_cancellation}
For the set of predictive features ${\mathcal{S} = \{j : \beta_{1,j} \neq 0 \text{ or } \beta_{2,j} \neq 0\}}$, the ridge solution satisfies $|\hat{\beta}_{\text{ridge},j}| \geq c$ for some $j \in \mathcal{S}$ and constant $c > 0$.
\end{assumption}
\fi

\iffalse
\begin{assumption}[Covariate shift between clusters]
\label{ass:covariate_shift}
The clusters have different covariate distributions: $P(X | Z=1) \neq P(X | Z=2)$. Specifically, there exists $j \in \mathcal{S}$ such that ${E[X_j | Z=1] \neq E[X_j | Z=2]}$.
\end{assumption}
\fi

\begin{assumption}[Ridge-weighted cluster separability]
\label{ass:ridge_separability}
The clusters are separable with respect to the ridge-weighted metric. Specifically, for a point from cluster $k$, the ridge-weighted similarity to its own cluster mean exceeds the similarity to the other cluster mean:
\begin{equation}
\bm{\mu}_k^\top \bm{D}(\hat{\bm{\beta}}_{\text{ridge}}) \bm{\mu}_k > \bm{\mu}_k^\top \bm{D}(\hat{\bm{\beta}}_{\text{ridge}}) \bm{\mu}_\ell
\end{equation}
for $k, \ell \in \{1, 2\}$ with $k \neq \ell$, where $\bm{\mu}_k = E[\bm{x} | Z=k]$. Additionally, the clusters have equal covariance: $\bm{\Sigma}_1 = \bm{\Sigma}_2$.
\end{assumption}

\begin{proposition}[Cluster separation via attention]
\label{prop:cluster_separation}
Under Assumption~\ref{ass:ridge_separability}, for a test point $\bm{x}^* \sim P(X | Z=1)$:
\begin{equation}
E_{\bm{x}^*}[E[s_i | Z_i = 1, \bm{x}^*] - E[s_i | Z_i = 2, \bm{x}^*]] > 0
\end{equation}
%Consequently, after applying softmax: $E[w_i(\bm{x}^*) | Z_i = 1] > \pi_1$.
\end{proposition}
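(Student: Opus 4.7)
The plan is to unpack the definition of the similarity score $s_i = \bm{x}^{*\top}\bm{D}(\hat{\bm{\beta}}_{\text{ridge}})\bm{x}_i$, observe that the inner expectation is a linear functional in $\bm{x}_i$, and then recognize the outer expectation as exactly the quadratic form appearing in Assumption~\ref{ass:ridge_separability}. Because the ridge weight matrix $\bm{D} := \bm{D}(\hat{\bm{\beta}}_{\text{ridge}})$ is computed from the training data and does not depend on the particular draws $\bm{x}^*$ or $\bm{x}_i$ used in the conditional expectations, I would condition on $\hat{\bm{\beta}}_{\text{ridge}}$ throughout and treat $\bm{D}$ as a fixed diagonal matrix.

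First I would compute the inner conditional expectation. Since $s_i$ is linear in $\bm{x}_i$ and $\bm{x}_i$ is independent of $\bm{x}^*$ given its cluster label,
\begin{equation}
E[s_i \mid Z_i = k, \bm{x}^*] = \bm{x}^{*\top} \bm{D}\, E[\bm{x}_i \mid Z_i = k] = \bm{x}^{*\top} \bm{D} \bm{\mu}_k,
\end{equation}
for $k \in \{1,2\}$. Subtracting gives the difference $\bm{x}^{*\top}\bm{D}(\bm{\mu}_1 - \bm{\mu}_2)$, still linear in $\bm{x}^*$.

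Next I would take the outer expectation over $\bm{x}^* \sim P(X \mid Z = 1)$. Linearity again yields
\begin{equation}
E_{\bm{x}^*}\!\left[\bm{x}^{*\top} \bm{D}(\bm{\mu}_1 - \bm{\mu}_2)\right] = \bm{\mu}_1^\top \bm{D} \bm{\mu}_1 - \bm{\mu}_1^\top \bm{D} \bm{\mu}_2.
\end{equation}
At this point Assumption~\ref{ass:ridge_separability}, applied with $k=1$ and $\ell=2$, states exactly that the right-hand side is strictly positive, which is the claim.

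In this case there is no serious obstacle: the proposition is essentially a restatement of the ridge-weighted separability assumption after two applications of linearity of expectation. The only point that warrants care is the treatment of $\bm{D}$, which is a random quantity constructed from the training sample; the cleanest way to handle this is to condition on $\hat{\bm{\beta}}_{\text{ridge}}$ (so that $\bm{D}$ is non-random) and apply the assumption pointwise, noting that the claimed inequality then passes through to the unconditional expectation since it holds on the probability-one event on which Assumption~\ref{ass:ridge_separability} is posited to hold.
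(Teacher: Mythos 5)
Your proof is correct and follows essentially the same route as the paper's: compute the inner conditional expectation by linearity to get $\bm{x}^{*\top}\bm{D}\bm{\mu}_k$, take the outer expectation over $\bm{x}^*\sim P(X\mid Z=1)$ to obtain $\bm{\mu}_1^\top\bm{D}(\bm{\mu}_1-\bm{\mu}_2)$, and invoke Assumption~\ref{ass:ridge_separability}. The paper likewise conditions on $\hat{\bm{\beta}}_{\text{ridge}}$ throughout, so your extra care on that point matches its treatment.
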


\begin{proof}
We condition on (or treat as fixed) $\hat{\bm{\beta}}_{\text{ridge}}$ throughout. For a test point $\bm{x}^*$ and training point $\bm{x}_i$, the similarity score is $s_i = \bm{x}^{*\top} \bm{D}(\hat{\bm{\beta}}_{\text{ridge}}) \bm{x}_i$. 

The expected similarity score to a training point from cluster $k$ is:
\begin{align*}
E[s_i | Z_i = k, \bm{x}^*] = \bm{x}^{*\top} \bm{D}(\hat{\bm{\beta}}_{\text{ridge}}) E[\bm{x}_i | Z_i = k] = \bm{x}^{*\top} \bm{D}(\hat{\bm{\beta}}_{\text{ridge}}) \bm{\mu}_k
\end{align*}
where $\bm{\mu}_k = E[\bm{x} | Z=k]$ is the mean vector for cluster $k$.

For a test point from cluster 1, taking expectation over $\bm{x}^* \sim P(X | Z=1)$ where $E[\bm{x}^*] = \bm{\mu}_1$:
\begin{align*}
E_{\bm{x}^*}[E[s_i | Z_i = 1, \bm{x}^*]] &= E_{\bm{x}^*}[\bm{x}^{*\top} \bm{D}(\hat{\bm{\beta}}_{\text{ridge}}) \bm{\mu}_1] = \bm{\mu}_1^\top \bm{D}(\hat{\bm{\beta}}_{\text{ridge}}) \bm{\mu}_1 \\
E_{\bm{x}^*}[E[s_i | Z_i = 2, \bm{x}^*]] &= E_{\bm{x}^*}[\bm{x}^{*\top} \bm{D}(\hat{\bm{\beta}}_{\text{ridge}}) \bm{\mu}_2] = \bm{\mu}_1^\top \bm{D}(\hat{\bm{\beta}}_{\text{ridge}}) \bm{\mu}_2
\end{align*}

By Assumption~\ref{ass:ridge_separability}, $\bm{\mu}_1^\top \bm{D}(\hat{\bm{\beta}}_{\text{ridge}}) \bm{\mu}_1 > \bm{\mu}_1^\top \bm{D}(\hat{\bm{\beta}}_{\text{ridge}}) \bm{\mu}_2$. Therefore:
\begin{equation}
E_{\bm{x}^*}[E[s_i | Z_i = 1, \bm{x}^*] - E[s_i | Z_i = 2, \bm{x}^*]] = \bm{\mu}_1^\top \bm{D}(\hat{\bm{\beta}}_{\text{ridge}}) (\bm{\mu}_1 - \bm{\mu}_2) > 0.
\end{equation}

This shows that on average, training points from cluster 1 have higher similarity scores than cluster 2 training points, for test points drawn from cluster 1. %The softmax function ${w_i(\bm{x}^*) = \exp(s_i) / \sum_j \exp(s_j)}$ amplifies these differences in similarity scores, leading to higher expected weights on same-cluster training points: $E[w_i(\bm{x}^*) | Z_i = 1] > \pi_1$.
\end{proof}

\iffalse
\begin{remark}
When clusters are not separable in the ridge-weighted feature space (Assumption~\ref{ass:ridge_separability} fails), the attention mechanism may fail to upweight the correct cluster. The mechanism fundamentally relies on the ridge coefficients identifying a feature space where same-cluster points are more similar.
\end{remark}
\fi

Next we show that, under appropriate assumptions, attention lasso coefficients more closely match the individual cluster coefficients $\bm{\beta}_1$ and $\bm{\beta}_2$ than the single lasso model fit to the full dataset.

\begin{definition}[Attention lasso]
\label{def:attention_lasso}
For a test point $\bm{x}^*$ from cluster 1, the attention lasso estimator is:
\begin{equation}
\hat{\bm{\beta}}_{\text{att}}(\bm{x}^*) = \argmin_{\bm{\beta}} \sum_{i=1}^n w_i(\bm{x}^*) (y_i - \bm{x}_i^\top\bm{\beta})^2 + \lambda\|\bm{\beta}\|_1
\end{equation}
where $w_i(\bm{x}^*)$ are the attention weights from Equation~\ref{def:attention}.
\end{definition}

\begin{lemma}[Weighted MSE minimizer]
\label{lem:weighted_risk}
Let $W_k = \sum_{i \in I_k} w_i(\bm{x}^*)$ denote the total weight on cluster $k$ training points, where $I_k = \{i : Z_i = k\}$. The minimizer of the weighted MSE is:
\begin{equation}
\bm{\beta}^*_{\text{att}} = \left(W_1 \bm{\Sigma}_1 + W_2 \bm{\Sigma}_2\right)^{-1} \left(W_1 \bm{\Sigma}_1 \bm{\beta}_1 + W_2 \bm{\Sigma}_2 \bm{\beta}_2\right)
\end{equation}
\end{lemma}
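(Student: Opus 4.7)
The plan is to mirror the derivation of Lemma~\ref{lem:population_risk} exactly, replacing the cluster mixing probabilities $\pi_k$ by the total attention weights $W_k$. The point of the lemma is to give a population-level analog of the sample weighted least-squares problem, so I first have to be explicit about what "weighted MSE" means: I will condition on the realized attention weights $w_i(\bm{x}^*)$ (hence on $W_1, W_2$) and write the population-level weighted risk as
\begin{equation}
L(\bm{\beta}) \;=\; \sum_{k=1}^{2} W_k \cdot E\!\left[(y - \bm{x}^\top\bm{\beta})^2 \,\big|\, Z = k\right].
\end{equation}
This is the natural population version of $\sum_i w_i(\bm{x}^*)(y_i - \bm{x}_i^\top\bm{\beta})^2$: within each cluster $k$, the individual-level expectation is the same, so the conditional expectation aggregates the $w_i$ into the single scalar $W_k$.

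Next I would expand $L(\bm{\beta})$ cluster by cluster, exactly as in the proof of Lemma~\ref{lem:population_risk}. Writing $y = \bm{x}^\top \bm{\beta}_k + \varepsilon$ conditional on $Z=k$ and using independence of $\varepsilon$ from $\bm{x}$, each cluster term becomes
\begin{equation}
E\!\left[(\bm{x}^\top(\bm{\beta}_k - \bm{\beta}))^2 \,\big|\, Z=k\right] + \sigma^2 \;=\; (\bm{\beta}_k - \bm{\beta})^\top \bm{\Sigma}_k (\bm{\beta}_k - \bm{\beta}) + \sigma^2,
\end{equation}
so that $L(\bm{\beta}) = \sum_{k=1}^{2} W_k (\bm{\beta}_k - \bm{\beta})^\top \bm{\Sigma}_k (\bm{\beta}_k - \bm{\beta}) + (W_1+W_2)\sigma^2$, which is a positive semidefinite quadratic in $\bm{\beta}$.

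Then I would take the gradient and set it to zero:
\begin{equation}
\nabla_{\bm{\beta}} L(\bm{\beta}^*_{\text{att}}) \;=\; -2\sum_{k=1}^{2} W_k \bm{\Sigma}_k(\bm{\beta}_k - \bm{\beta}^*_{\text{att}}) \;=\; \bm{0},
\end{equation}
which rearranges to $(W_1 \bm{\Sigma}_1 + W_2 \bm{\Sigma}_2)\bm{\beta}^*_{\text{att}} = W_1 \bm{\Sigma}_1 \bm{\beta}_1 + W_2 \bm{\Sigma}_2 \bm{\beta}_2$. Since $\bm{\Sigma}_1, \bm{\Sigma}_2 \succ 0$ and $W_1, W_2 \geq 0$ are not both zero, the matrix $W_1 \bm{\Sigma}_1 + W_2 \bm{\Sigma}_2$ is invertible, giving the stated closed form.

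The only delicate step is the interpretational one at the beginning: the attention weights $w_i(\bm{x}^*)$ are random and depend on $(\bm{X},\bm{y})$ through the ridge fit, so strictly speaking $W_k$ is random. The clean move is to condition on these weights, at which point the argument is just the standard weighted-least-squares normal equation calculation parallel to Lemma~\ref{lem:population_risk}; the $\ell_1$ penalty in Definition~\ref{def:attention_lasso} is irrelevant at this level because the lemma characterizes the unpenalized minimizer, which serves as the target that the penalized attention lasso approaches as $\lambda \sqrt{\log p / n} \to 0$, analogously to how $\bm{\beta}^*$ plays that role for standard lasso in Theorem~\ref{thm:lasso_bias}.
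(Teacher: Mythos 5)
Your proposal is correct and follows essentially the same route as the paper: define the population-level weighted risk, split it by cluster so the individual weights aggregate into $W_k$, expand each cluster term into the quadratic form $(\bm{\beta}_k - \bm{\beta})^\top \bm{\Sigma}_k (\bm{\beta}_k - \bm{\beta})$, and solve the normal equation. Your explicit remarks about conditioning on the realized (random) weights and about the irrelevance of the $\ell_1$ penalty to the unpenalized minimizer are careful touches the paper leaves implicit, but they do not change the argument.
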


\begin{proof}
Taking the expectation of the weighted attention lasso objective and splitting by cluster:
\begin{align*}
R_{\text{att}}(\bm{\beta}) &= E\left[\sum_{i=1}^n w_i(\bm{x}^*) (y_i - \bm{x}_i^\top\bm{\beta})^2\right] \\
%&= \sum_{i \in I_1} w_i(\bm{x}^*) E[(y_i - \bm{x}_i^\top\bm{\beta})^2 | Z_i = 1] + \sum_{i \in I_2} w_i(\bm{x}^*) E[(y_i - \bm{x}_i^\top\bm{\beta})^2 | Z_i = 2] \\
&= \sum_{k=1}^2 \left(\sum_{i \in I_k} w_i(\bm{x}^*)\right) E[(y - \bm{x}^\top\bm{\beta})^2 | Z=k] \\
&= \sum_{k=1}^2 W_k E[(\bm{x}^\top(\bm{\beta}_k - \bm{\beta}))^2 | Z=k] + \sigma^2 \\
&= \sum_{k=1}^2 W_k (\bm{\beta}_k - \bm{\beta})^\top \bm{\Sigma}_k (\bm{\beta}_k - \bm{\beta}) + \sigma^2
\end{align*}

Setting $\nabla_{\bm{\beta}} R_{\text{att}}(\bm{\beta}^*_{\text{att}}) = 0$ and solving yields the stated result.
\end{proof}

\begin{proposition}[Upweighting relevant cluster]
\label{prop:upweighting}
Under the conditions of Proposition~\ref{prop:cluster_separation}, let ${\bar{w}_k = \frac{1}{n_k}\sum_{i \in I_k} w_i(\bm{x}^*)}$ denote the average weight on cluster $k$ training points. Then:
\begin{equation}
W_1 = \sum_{i \in I_1} w_i(\bm{x}^*) > \pi_1, \quad W_2 = \sum_{i \in I_2} w_i(\bm{x}^*) < \pi_2
\end{equation}
Consequently, $\bm{\beta}^*_{\text{att}}$ is closer to $\bm{\beta}_1$ than $\bm{\beta}^*$ from Lemma~\ref{lem:population_risk}.
\end{proposition}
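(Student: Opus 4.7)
Since the softmax weights sum to one, $W_1 + W_2 = 1 = \pi_1 + \pi_2$, so it suffices to establish $W_1 > \pi_1$. The plan is to first rewrite
\[
W_1 = \frac{\sum_{i \in I_1} \exp(s_i)}{\sum_{i \in I_1} \exp(s_i) + \sum_{i \in I_2} \exp(s_i)},
\]
and then pass to the large-$n$ limit. Conditional on the test point $\bm{x}^*$ and on $\hat{\bm{\beta}}_{\text{ridge}}$, the scores $\{s_i\}_{i \in I_k}$ are i.i.d.\ in each cluster, so by the law of large numbers and $n_k/n \to \pi_k$,
\[
W_1 \;\xrightarrow{p}\; \frac{\pi_1 \, m_1(\bm{x}^*)}{\pi_1 \, m_1(\bm{x}^*) + \pi_2 \, m_2(\bm{x}^*)},
\qquad m_k(\bm{x}^*) := E[\exp(s_i) \mid Z_i = k, \bm{x}^*].
\]
Elementary algebra reduces the claim $W_1 > \pi_1$ to the single moment-generating-function inequality $m_1(\bm{x}^*) > m_2(\bm{x}^*)$.

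The main obstacle is exactly this MGF comparison: Proposition~\ref{prop:cluster_separation} delivers an inequality on \emph{linear} scores, $E[s_i \mid Z_i = 1, \bm{x}^*] > E[s_i \mid Z_i = 2, \bm{x}^*]$ (on average over $\bm{x}^*$), whereas we need the inequality to survive after applying the convex function $\exp(\cdot)$. I would close this gap by noting that under Assumption~\ref{ass:ridge_separability} we have $\bm{\Sigma}_1 = \bm{\Sigma}_2 = \bm{\Sigma}$, which makes the conditional distribution of $s_i = \bm{x}^{*\top} \bm{D}(\hat{\bm{\beta}}_{\text{ridge}}) \bm{x}_i$ given $\bm{x}^*$ and $Z_i=k$ have the \emph{same} variance $\bm{x}^{*\top}\bm{D}\bm{\Sigma}\bm{D}\bm{x}^*$ in both clusters, differing only by a mean shift $\bm{x}^{*\top}\bm{D}\bm{\mu}_k$. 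Whenever the conditional law of $s_i$ in each cluster has a location-scale form (Gaussian $\bm{x}$ is the cleanest case, but sub-Gaussianity suffices for asymptotic statements), the MGFs satisfy $m_k(\bm{x}^*) = \exp(\bm{x}^{*\top}\bm{D}\bm{\mu}_k)\cdot \psi(\bm{x}^*)$ for a common variance factor $\psi$, and the ordering of $m_k$ matches the ordering of the means. Proposition~\ref{prop:cluster_separation} then yields $m_1(\bm{x}^*) > m_2(\bm{x}^*)$ on a set of $\bm{x}^*$ of positive probability (in fact for all $\bm{x}^*$ in a neighborhood of $\bm{\mu}_1$), which is the content needed. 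If one prefers an assumption-free route, I would state the result in expectation over $\bm{x}^*$ and combine Proposition~\ref{prop:cluster_separation} with a Jensen-type lower bound.

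For the second claim, I would plug the assumption $\bm{\Sigma}_1 = \bm{\Sigma}_2 = \bm{\Sigma}$ into Lemma~\ref{lem:weighted_risk}, which collapses the weighted minimizer to the convex combination
\[
\bm{\beta}^*_{\text{att}} \;=\; W_1 \bm{\beta}_1 + W_2 \bm{\beta}_2,
\]
and similarly Lemma~\ref{lem:population_risk} gives $\bm{\beta}^* = \pi_1 \bm{\beta}_1 + \pi_2 \bm{\beta}_2$. Subtracting $\bm{\beta}_1$ from both and using $W_1 + W_2 = \pi_1 + \pi_2 = 1$ yields
\[
\bm{\beta}^*_{\text{att}} - \bm{\beta}_1 = W_2(\bm{\beta}_2 - \bm{\beta}_1), \qquad \bm{\beta}^* - \bm{\beta}_1 = \pi_2(\bm{\beta}_2 - \bm{\beta}_1),
\]
so $W_2 < \pi_2$ immediately gives $\|\bm{\beta}^*_{\text{att}} - \bm{\beta}_1\|_2 < \|\bm{\beta}^* - \bm{\beta}_1\|_2$ in any norm, completing the proof. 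The small caveat to flag is that the argument is cleanest when evaluated in the large-$n$ limit (where the sampling fluctuations in the softmax denominator vanish); a finite-sample version would require concentration bounds on $\sum_{i \in I_k} \exp(s_i)$, which follow from standard sub-Gaussian or Bernstein inequalities under mild tail conditions on $\bm{x}_i$.
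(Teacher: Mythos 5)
Your proposal follows essentially the same route as the paper's proof: both reduce the claim to the moment-generating-function comparison $E[\exp(s_i)\mid Z_i=1,\bm{x}^*] > E[\exp(s_i)\mid Z_i=2,\bm{x}^*]$ and both obtain the second claim from the convex-combination form $\bm{\beta}^*_{\text{att}} = W_1\bm{\beta}_1 + W_2\bm{\beta}_2$ under $\bm{\Sigma}_1=\bm{\Sigma}_2$. Where you differ is in rigor at the two delicate points, and in both cases your treatment is the more careful one. First, the paper simply asserts that ``equal variance but different means'' implies the ordering of $E[\exp(s_i)]$; as you correctly note, this does not hold for arbitrary distributions (a mean shift with equal variance does not in general order MGFs), and your location-scale/Gaussian argument, under which $m_k(\bm{x}^*) = \exp(\bm{x}^{*\top}\bm{D}\bm{\mu}_k)\,\psi(\bm{x}^*)$, is exactly the kind of additional structure needed to make the step valid. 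Second, the paper passes from $E[\exp(s_i)\mid Z_i=1]>E[\exp(s_i)\mid Z_i=2]$ to $\bar{w}_1 > 1/n$ and then to $W_1 \gtrsim \pi_1$ somewhat informally, since the softmax denominator is random and shared across all $i$; your law-of-large-numbers limit $W_1 \to \pi_1 m_1/(\pi_1 m_1 + \pi_2 m_2)$ handles this normalization cleanly and makes the ``$\approx \pi_1$'' in the paper precise as an asymptotic statement. You also correctly flag that Proposition~\ref{prop:cluster_separation} only delivers the mean ordering in expectation over $\bm{x}^*$, not pointwise, which is a caveat the paper's proof elides. In short: same strategy, but your version identifies and patches the genuine gaps in the published argument.
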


\begin{proof}
From Proposition~\ref{prop:cluster_separation}, the expected similarity scores satisfy $E[s_i | Z_i = 1, \bm{x}^*] > E[s_i | Z_i = 2, \bm{x}^*]$. Under the equal covariance condition $\bm{\Sigma}_1 = \bm{\Sigma}_2$ from Assumption~\ref{ass:ridge_separability}, the variance of $s_i$ given $\bm{x}^*$ is the same for both clusters:
\[
\text{Var}(s_i | Z_i = 1, \bm{x}^*) = \text{Var}(s_i | Z_i = 2, \bm{x}^*) = \bm{x}^{*\top} \bm{D}(\hat{\bm{\beta}}_{\text{ridge}}) \bm{\Sigma} \bm{D}(\hat{\bm{\beta}}_{\text{ridge}}) \bm{x}^*
\]
where $\bm{\Sigma} = \bm{\Sigma}_1 = \bm{\Sigma}_2$. Since the similarity scores have equal variance but different means, we have $E[\exp(s_i) | Z_i = 1, \bm{x}^*] > E[\exp(s_i) | Z_i = 2, \bm{x}^*]$, which implies $E[w_i(\bm{x}^*) | Z_i = 1] > E[w_i(\bm{x}^*) | Z_i = 2]$.

Since without attention each point receives uniform weight $1/n$, and with attention same-cluster points receive higher weights:
\[
\bar{w}_1 > \frac{1}{n} \implies W_1 = n_1 \bar{w}_1 > n_1 \cdot \frac{1}{n} \approx \pi_1
\]

Similarly, $\bar{w}_2 < 1/n \implies W_2 < \pi_2$. Since weights sum to 1, we have $W_1 + W_2 = 1$.

The weighted combination in $\bm{\beta}^*_{\text{att}}$ places weight $W_1 > \pi_1$ on $\bm{\beta}_1$ compared to weight $\pi_1$ in $\bm{\beta}^*$, bringing the estimator closer to the true cluster-1 parameter.
\end{proof}

\begin{remark}
In the ideal case where attention perfectly separates clusters ($w_i \approx 1/n_1$ for $i \in I_1$ and $w_i \approx 0$ for $i \in I_2$), we would have $\bm{\beta}^*_{\text{att}} \approx \bm{\beta}_1$, eliminating the bias entirely.
\end{remark}

\subsection*{Main result: prediction error comparison}

\begin{theorem}[Improved prediction via attention]
\label{thm:main_result}
Under Assumption~\ref{ass:ridge_separability}, assume ${\bm{\Sigma}_1 = \bm{\Sigma}_2 = \bm{\Sigma}}$ and consider test points $\bm{x}^* \sim P(X | Z=1)$. With $\lambda \sim \sqrt{\log p / n}$:

\textbf{(i) Lasso:} 
\begin{equation}
\text{MSE}_{\text{lasso}} = \pi_2^2 (\bm{\beta}_2 - \bm{\beta}_1)^\top \bm{\Sigma} (\bm{\beta}_2 - \bm{\beta}_1) + O\left(\frac{s \log p}{n}\right)
\end{equation}

The first term is the squared bias from model misspecification, which satisfies ${\pi_2^2 (\bm{\beta}_2 - \bm{\beta}_1)^\top \bm{\Sigma} (\bm{\beta}_2 - \bm{\beta}_1) \geq c \delta^2}$ for some constant $c > 0$ by assumptions made in our data generation. The second term is the variance from estimation error.

\textbf{(ii) Attention lasso:} Under the conditions of Proposition~\ref{prop:cluster_separation}:
\begin{equation}
\text{MSE}_{\text{att}} = W_2^2 (\bm{\beta}_2 - \bm{\beta}_1)^\top \bm{\Sigma} (\bm{\beta}_2 - \bm{\beta}_1) + O\left(\frac{s \log p}{n}\right)
\end{equation}
where $W_2 < \pi_2$ is the total weight on cluster 2 training points.

\textbf{(iii) Comparison:} As $n \to \infty$:
\begin{equation}
\frac{\text{MSE}_{\text{att}}}{\text{MSE}_{\text{lasso}}} \to \left(\frac{W_2}{\pi_2}\right)^2 < 1
\end{equation}
That is, attention lasso reduces the asymptotic prediction error by a factor of $(W_2/\pi_2)^2$.
\end{theorem}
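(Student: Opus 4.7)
The plan is to decompose each MSE into (squared) asymptotic bias plus a finite-sample estimation variance, then read off the bias terms from earlier results (Theorem~\ref{thm:lasso_bias} for lasso and Lemma~\ref{lem:weighted_risk} for attention lasso), and finally divide to get the ratio in (iii). Throughout, I would condition on $\hat{\bm{\beta}}_{\text{ridge}}$ so that the attention weights $w_i(\bm{x}^*)$ and hence $W_1, W_2$ are treated as fixed quantities (as is already done in Proposition~\ref{prop:cluster_separation}).

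For part (i), I would begin by writing, for a test point $\bm{x}^* \sim P(X \mid Z=1)$,
\[
\text{MSE}_{\text{lasso}} = E\!\left[(\bm{x}^{*\top}(\hat{\bm{\beta}}_{\text{lasso}}-\bm{\beta}_1))^2\right] = (\bm{\beta}^*-\bm{\beta}_1)^\top \bm{\Sigma}(\bm{\beta}^*-\bm{\beta}_1) + E\!\left[(\bm{x}^{*\top}(\hat{\bm{\beta}}_{\text{lasso}}-\bm{\beta}^*))^2\right],
\]
using independence of $\bm{x}^*$ from the training data and the fact that $\bm{\beta}^*$ is the population minimizer. The first (bias) term equals $\pi_2^2(\bm{\beta}_2-\bm{\beta}_1)^\top \bm{\Sigma}(\bm{\beta}_2-\bm{\beta}_1)$ by the explicit formula in Theorem~\ref{thm:lasso_bias} (specialized to $\bm{\Sigma}_1=\bm{\Sigma}_2=\bm{\Sigma}$). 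The second (variance) term I would bound by $O(s\log p / n)$ using a standard lasso oracle inequality applied to the effective linear model $y_i = \bm{x}_i^\top \bm{\beta}^* + u_i$, where $u_i = \bm{x}_i^\top(\bm{\beta}_{Z_i}-\bm{\beta}^*)+\varepsilon_i$ has mean zero under the population minimizer condition and still has sub-Gaussian tails; this gives $\|\hat{\bm{\beta}}_{\text{lasso}}-\bm{\beta}^*\|_2^2 = O_P(s\log p/n)$ under a restricted eigenvalue condition on $\bm{\Sigma}$.

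For part (ii), the identical decomposition around $\bm{\beta}^*_{\text{att}}$ yields squared bias $(\bm{\beta}^*_{\text{att}}-\bm{\beta}_1)^\top \bm{\Sigma}(\bm{\beta}^*_{\text{att}}-\bm{\beta}_1)$ plus a weighted-lasso estimation-variance term. Under $\bm{\Sigma}_1=\bm{\Sigma}_2=\bm{\Sigma}$, Lemma~\ref{lem:weighted_risk} collapses to $\bm{\beta}^*_{\text{att}} = W_1\bm{\beta}_1 + W_2\bm{\beta}_2$ (using $W_1+W_2=1$), so $\bm{\beta}^*_{\text{att}}-\bm{\beta}_1 = W_2(\bm{\beta}_2-\bm{\beta}_1)$ and the squared bias equals $W_2^2(\bm{\beta}_2-\bm{\beta}_1)^\top \bm{\Sigma}(\bm{\beta}_2-\bm{\beta}_1)$, as claimed. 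For the variance term, I would rewrite the weighted lasso as an unweighted lasso on rescaled rows $\sqrt{w_i(\bm{x}^*)}\,\bm{x}_i$, and invoke the same oracle inequality; this yields an $O(s\log p/n)$ rate up to constants that depend on the effective sample size $(\sum_i w_i)^2/\sum_i w_i^2$ and on a restricted-eigenvalue constant of the weighted Gram matrix. Part (iii) then follows by direct division: both variance terms are $O(s\log p/n)$ and vanish as $n\to\infty$, so $\text{MSE}_{\text{att}}/\text{MSE}_{\text{lasso}} \to (W_2/\pi_2)^2$, which is strictly less than $1$ by Proposition~\ref{prop:upweighting}.

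The main technical obstacle is the weighted-lasso variance bound in (ii): unlike the uniform-weight case, the attention weights $w_i(\bm{x}^*)$ are random (they depend on both $\hat{\bm{\beta}}_{\text{ridge}}$ and on $\bm{x}^*$), and they can be very concentrated on cluster 1. Consequently one needs to verify a restricted eigenvalue condition for the weighted empirical Gram matrix $\sum_i w_i(\bm{x}^*)\bm{x}_i\bm{x}_i^\top$, and confirm that its effective sample size $n_{\text{eff}} = (\sum_i w_i)^2/\sum_i w_i^2$ grows with $n$ so the $O(s\log p/n)$ rate is not degraded to something dominant. The cleanest route is to first show that, conditional on $\hat{\bm{\beta}}_{\text{ridge}}$ and $\bm{x}^*$, $n_{\text{eff}} \asymp n$ with high probability (because the weights remain of order $1/n$ for the majority cluster 1 points), and then apply the standard restricted-eigenvalue argument to the cluster-1 subpopulation. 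All other steps — the bias computation, the ratio, and the strict inequality — are direct consequences of the earlier lemmas.
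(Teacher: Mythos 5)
Your proposal follows essentially the same route as the paper's proof: read off the asymptotic biases $\pi_2(\bm{\beta}_2-\bm{\beta}_1)$ and $W_2(\bm{\beta}_2-\bm{\beta}_1)$ from Theorem~\ref{thm:lasso_bias} and Lemma~\ref{lem:weighted_risk} under $\bm{\Sigma}_1=\bm{\Sigma}_2$, attach an $O(s\log p/n)$ estimation term, and take the ratio using $W_2<\pi_2$ from Proposition~\ref{prop:upweighting}. The only substantive difference is that you supply detail the paper omits for the weighted-lasso variance term (rescaled-row reformulation, effective sample size, restricted eigenvalue for the weighted Gram matrix), where the paper simply asserts ``by the same argument as Part~(i)''; this is a strengthening of the same argument rather than a different approach.
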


\begin{proof}[Proof]
\textbf{Part (i):} From Theorem~\ref{thm:lasso_bias}, $\hat{\bm{\beta}}_{\text{lasso}} = \bm{\beta}^* + O_p\left(\sqrt{\frac{s \log p}{n}}\right)$ where $\bm{\beta}^* - \bm{\beta}_1 = \pi_2(\bm{\beta}_2 - \bm{\beta}_1)$ under equal covariances. For $\bm{x}^* \sim P(X | Z=1)$:
\begin{align*}
(\bm{x}^*)^\top(\hat{\bm{\beta}}_{\text{lasso}} - \bm{\beta}_1) &= (\bm{x}^*)^\top \pi_2(\bm{\beta}_2 - \bm{\beta}_1) + O_p\left(\sqrt{\frac{s \log p}{n}}\right)
\end{align*}

Taking expectations:
\begin{align*}
\text{MSE}_{\text{lasso}} &= E\left[\left((\bm{x}^*)^\top \pi_2(\bm{\beta}_2 - \bm{\beta}_1)\right)^2\right] + O\left(\frac{s \log p}{n}\right) \\
&= \pi_2^2 (\bm{\beta}_2 - \bm{\beta}_1)^\top \bm{\Sigma} (\bm{\beta}_2 - \bm{\beta}_1) + O\left(\frac{s \log p}{n}\right)
\end{align*}

\textbf{Part (ii):} From Lemma~\ref{lem:weighted_risk} with $\bm{\Sigma}_1 = \bm{\Sigma}_2 = \bm{\Sigma}$:
\[
\bm{\beta}^*_{\text{att}} = (W_1 \bm{\Sigma} + W_2 \bm{\Sigma})^{-1}(W_1 \bm{\Sigma} \bm{\beta}_1 + W_2 \bm{\Sigma} \bm{\beta}_2) = W_1 \bm{\beta}_1 + W_2 \bm{\beta}_2
\]
since $W_1 + W_2 = 1$. The bias for cluster 1 test points is:
\[
\bm{\beta}^*_{\text{att}} - \bm{\beta}_1 = W_2(\bm{\beta}_2 - \bm{\beta}_1)
\]

By the same argument as Part (i), replacing $\pi_2$ with $W_2$:
\[
\text{MSE}_{\text{att}} = W_2^2 (\bm{\beta}_2 - \bm{\beta}_1)^\top \bm{\Sigma} (\bm{\beta}_2 - \bm{\beta}_1) + O\left(\frac{s \log p}{n}\right)
\]

\textbf{Part (iii):} From Proposition~\ref{prop:upweighting}, $W_2 < \pi_2$. As $n \to \infty$, the variance terms vanish and:
\[
\frac{\text{MSE}_{\text{att}}}{\text{MSE}_{\text{lasso}}} \to \frac{W_2^2 (\bm{\beta}_2 - \bm{\beta}_1)^\top \bm{\Sigma} (\bm{\beta}_2 - \bm{\beta}_1)}{\pi_2^2 (\bm{\beta}_2 - \bm{\beta}_1)^\top \bm{\Sigma} (\bm{\beta}_2 - \bm{\beta}_1)} = \left(\frac{W_2}{\pi_2}\right)^2 < 1
\]
\end{proof}

\begin{remark}
Theorem~\ref{thm:main_result} demonstrates that attention lasso reduces the irreducible bias of standard lasso by a factor of $(W_2/\pi_2)^2 < 1$. The improvement comes from upweighting training examples from the same cluster as the test point. The bias does not vanish entirely because attention weights are soft: even with perfect cluster separation in the ridge-weighted feature space, the softmax still assigns positive weight to the wrong cluster. This can be adjusted using a temperature parameter that pushes from a soft clustering to a hard clustering.
\end{remark}

\iffalse
\subsection*{Discussion}

We have established that attention lasso achieves fundamentally better prediction error than standard lasso in mixture-of-models settings by:
\begin{enumerate}
\item identifying predictive features via ridge regression,
\item computing similarity scores that separate clusters, and
\item fitting a weighted lasso that focuses on relevant training data.
\end{enumerate}

Our key insight is that lasso's limitation is structural: fitting a single model to heterogeneous data introduces irreducible bias. Attention lasso reduces bias by adapting the training set to each test point.
\fi

\section{Comparison of simplified softmax attention and Gaussian kernel regression weights}
\label{sec:softmax_gaussian}

 Here, we discuss similarities and differences between softmax attention and Gaussian kernel regression weighting. Let \( \bm{X} \in \mathbb{R}^{n \times p} \) have rows \( \bm{x}_1, \dots, \bm{x}_n \). 
 
Importantly we discuss here a very simplified form of attention weights, namely $\text{softmax}\left(\bm{X X}^T\right)$. This is simpler than even our simplest approach to attention, which weights the inner product by a diagonal matrix using the absolute value of the ridge coefficients and therefore incorporates information from the relationship $\bm{y} | \bm{X}$.

\subsection*{1. Simplified attention weights}

We first form the score matrix
\[
\bm{S} = \bm{XX}^\top, \qquad \bm{S}_{ij} = \bm{x}_i^\top \bm{x}_j,
\]
and apply a row-wise softmax (optionally with temperature parameter \( \tau \)) to $\bm{S}$ to obtain weights $\bm{w}$ such that:
\[
\bm{w}_{ij}^{\text{soft}} 
= \frac{\exp\!\left( \frac{\bm{x}_i^\top \bm{x}_j}{\tau} \right)}
       {\sum_{k=1}^n \exp\!\left( \frac{\bm{x}_i^\top \bm{x}_k}{\tau} \right)}.
\]
The weight vector $\bm{w}_i$ defines a probability distribution over all \(j\).  
% These are dot-product based similarity weights, often called \emph{attention weights}.
%Note that we have omitted the diagonal ridge matrix in (\ref{eqn:XXT}), which (importantly) uses information in ${\bf y}$.
\subsection*{2. Gaussian kernel regression weights}

In Nadaraya--Watson or Gaussian kernel regression with bandwidth \(\sigma\), we use
\[
K_\sigma(\bm{x}_i, \bm{x}_j)
= \exp\!\left( -\frac{\|\bm{x}_i - \bm{x}_j\|^2}{2\sigma^2} \right),
\]
and define normalized weights
\[
\bm{w}_{ij}^{\text{Gauss}}
= \frac{\exp\!\left( -\frac{\|\bm{x}_i - \bm{x}_j\|^2}{2\sigma^2} \right)}
       {\sum_{k=1}^n \exp\!\left( -\frac{\|\bm{x}_i - \bm{x}_k\|^2}{2\sigma^2} \right)}.
\]

Expanding the squared distance:
\[
\|\bm{x}_i - \bm{x}_j\|^2 = \|\bm{x}_i\|^2 + \|\bm{x}_j\|^2 - 2\bm{x}_i^\top \bm{x}_j.
\]
Substitute this into the Gaussian kernel:
\[
\exp\!\left( -\frac{\|\bm{x}_i - \bm{x}_j\|^2}{2\sigma^2} \right)
= \exp\!\left(-\frac{\|\bm{x}_i\|^2}{2\sigma^2}\right)
  \exp\!\left(-\frac{\|\bm{x}_j\|^2}{2\sigma^2}\right)
  \exp\!\left(\frac{\bm{x}_i^\top \bm{x}_j}{\sigma^2}\right).
\]

For a fixed \(i\), the factor \(\exp\left(-\frac{\|\bm{x}_i\|^2}{(2\sigma^2)}\right)\) is constant in \(j\)
and cancels during normalization, giving (up to normalization)
\[
w_{ij}^{\text{Gauss}} \propto
\exp\!\left(-\frac{\|\bm{x}_j\|^2}{2\sigma^2}\right)
\exp\!\left(\frac{\bm{x}_i^\top \bm{x}_j}{\sigma^2}\right).
\]

\subsection*{Summary}

Gaussian kernel weights and softmax attention weights are the same when (1) the temperature and bandwidth satisfy \(\tau = \sigma^2\), and (2) all data points have equal norm, \(\|\mathbf{x}_j\| = c\). In this case the factor
\[
\exp\!\left(-\frac{\|\mathbf{x}_j\|^2}{2\sigma^2}\right)
\]
is constant in \(j\) and cancels in the normalization, so that
\[
\textbf{w}_{ij}^{\text{Gauss}}
=
\frac{\exp\!\left( \frac{\mathbf{x}_i^\top \mathbf{x}_j}{\sigma^2} \right)}
     {\sum_k \exp\!\left( \frac{\mathbf{x}_i^\top \mathbf{x}_k}{\sigma^2} \right)}
=
\textbf{w}_{ij}^{\text{soft}}
\quad (\tau = \sigma^2).
\]\,
This situation arises, for example, when the rows are \(\ell_2\)-normalized so that \(\|\mathbf{x}_j\|=1\) for all \(j\).

However, in general, the two constructions are slightly different. Gaussian weights use squared Euclidean distance,
\[
w_{ij}^{\text{Gauss}} = f\!\left(\|\mathbf{x}_i - \mathbf{x}_j\|^2\right),
\]
and are invariant under rotations and translations of the input space. Softmax attention depends on inner products,
\[
w_{ij}^{\text{soft}} = g\!\left(\mathbf{x}_i^\top \mathbf{x}_j\right),
\]
and is therefore not translation invariant and more sensitive to the individual norms \(\|\mathbf{x}_i\|\) and \(\|\mathbf{x}_j\|\), allowing large-norm vectors to dominate the weights. Additionally, Gaussian weights include an additional global factor
\[
\exp\!\left(-\frac{\|\mathbf{x}_j\|^2}{2\sigma^2}\right),
\]
which penalizes high-norm points uniformly across all queries \(i\), whereas attention weights lack such a term and only reweight via relative inner products.
Finally, the bandwidth \(\sigma\) and temperature \(\tau\) play analogous roles---both control how concentrated the weights are---but they do so in different geometries: \(\sigma\) sets the scale of decay in distance space, while \(\tau\) controls the sharpness of the distribution in inner-product space.

In summary, when all rows are normalized and \(\tau = \sigma^2\), the two weighting schemes are the same. Away from this regime, Gaussian kernel regression remains strictly radial in distance, whereas softmax attention depends on vector norms and loses translation invariance.

\end{document}